
\documentclass[10pt,twocolumn,letterpaper]{article}

\usepackage[pagenumbers]{iccv} 

%
%

\usepackage{caption}
\usepackage{graphicx}
\usepackage{cuted}    
\usepackage{multirow}
\usepackage{algorithm}
\usepackage{algorithmicx}
\usepackage{algpseudocode}
\flushbottom

\usepackage{amsthm}
\usepackage{amsmath}
\newtheorem{theorem}{Theorem}
\newtheorem{lemma}[theorem]{Lemma}


\usepackage{array}

%
\definecolor{iccvblue}{rgb}{0.21,0.49,0.74}
\usepackage[pagebackref,breaklinks,colorlinks,allcolors=iccvblue]{hyperref}


\title{MotionPCM: Real-Time Motion Synthesis with Phased Consistency Model}

\author{Lei Jiang\\
University College London\\
 London, UK\\
{\tt\small lei.j@ucl.ac.uk}
\and
Ye Wei\\
University of Oxford\\
Oxford, UK\\
{\tt\small ye.wei@ndcls.ox.ac.uk}
\and
Hao Ni\\
University College London\\
 London, UK\\
{\tt\small h.ni@ucl.ac.uk}
}

\begin{document}

\maketitle

\begin{strip}
    \centering
    \includegraphics[width=0.82\textwidth]{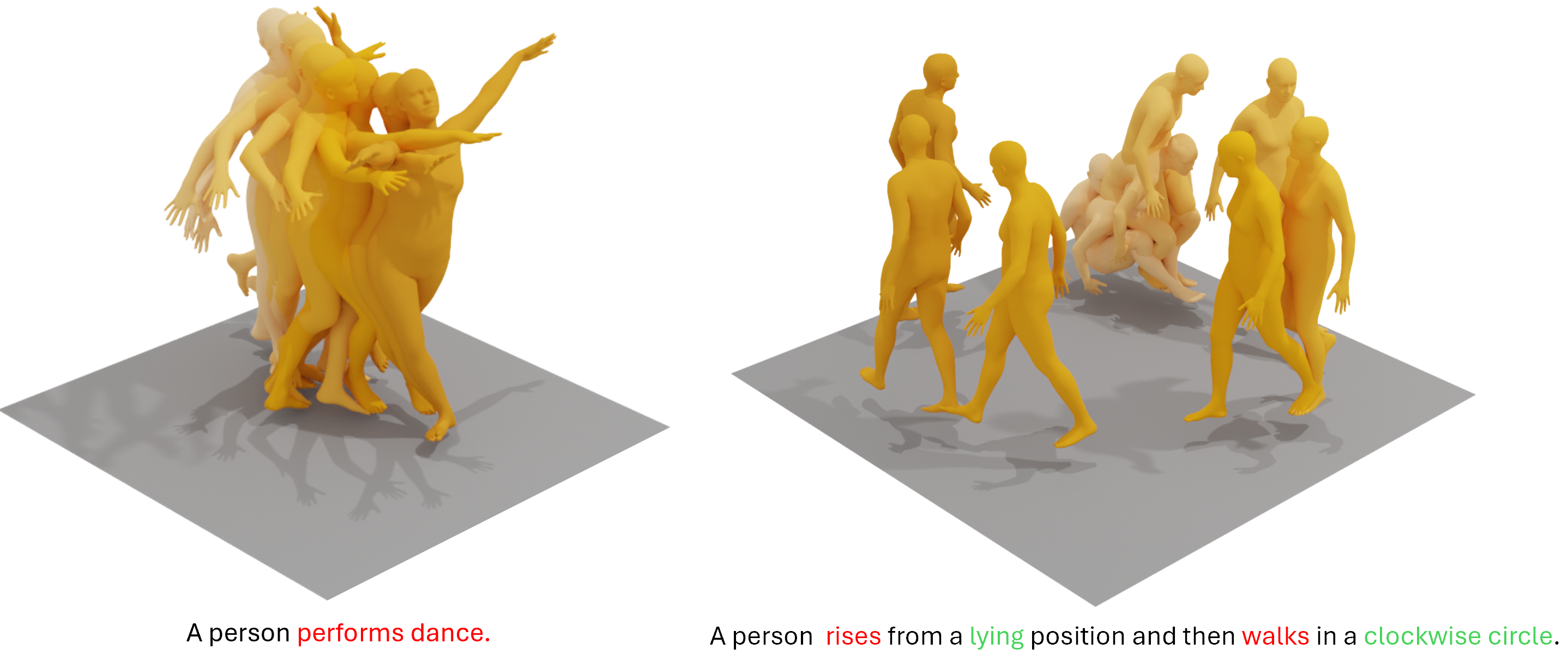}
    \captionof{figure}{We propose a new text-conditioned motion synthesis model: MotionPCM, capable of real-time motion generation with improved performance. Lighter colours represent earlier time points.}
\end{strip}

\begin{abstract}
Diffusion models have become a popular choice for human motion synthesis due to their powerful generative capabilities. However, their high computational complexity and large sampling steps pose challenges for real-time applications. Fortunately, the Consistency Model (CM) provides a solution to greatly reduce the number of sampling steps from hundreds to a few, typically fewer than four, significantly accelerating the synthesis of diffusion models. However, applying CM to text-conditioned human motion synthesis in latent space yields unsatisfactory generation results. In this paper, we introduce \textbf{MotionPCM}, a phased consistency model-based approach designed to improve the quality and efficiency for real-time motion synthesis in latent space. Experimental results on the HumanML3D dataset show that our model achieves real-time inference at over 30 frames per second in a single sampling step while outperforming the previous state-of-the-art with a 38.9\% improvement in FID. The code will be available for reproduction.   
\end{abstract}    
\section{Introduction}
Driven by the advancement of multimodal approaches, human motion synthesis can accommodate different conditional inputs, including text~\cite{lin2018generating,zhang2022motiondiffuse}, action categories~\cite{tevet2023human,chen2023executing}, action sequence~\cite{dai2025motionlcm} and music~\cite{li2021ai}. These developments bring immense potential in various domains, such as the gaming industry, film production and virtual reality. 

MotionDiffuse~\cite{zhang2022motiondiffuse} is the first to apply the diffusion model to generate human motion, achieving remarkable performance. However, MotionDiffuse processes the entire motion sequence with the diffusion model, leading to high computational cost and long inference time. To alleviate these issues, MLD~\cite{chen2023executing} utilising a Variational Autoencoder (VAE)~\cite{kingma2013auto} to compress the motion sequence into latent codes before feeding them to the diffusion model. This approach greatly boosts both the speed per sampling step and quality of motion synthesis. However, it still requires 50 inference steps with the acceleration of DDIM~\cite{song2020denoising}, making it impractical to implement motion synthesis in real time. 

Building upon MLD, MotionLCM~\cite{dai2025motionlcm} utilise the Latent Consistency Model (LCM)~\cite{luo2023latent}, enabling few-step inference and thus achieving real-time motion synthesis with a diffusion model. However, LCM’s design suffers from several issues, including consistency issues caused by accumulated stochastic noise during multi-step sampling as shown in Figure~\ref{fig:lcm-vs-pcm}. In addition, LCM suffers from significantly degraded sample quality in low-step sampling or a large Classifier-free Guidance (CFG)~\cite{ho2022classifier} scale. Identifying these flaws of LCM, Phased Consistency Model (PCM)~\cite{wang2024phased} introduces a refined architecture to address these limitations. Taking Figure~\ref{fig:lcm-vs-pcm} as an example, PCM is trained with two sub-trajectories, enabling efficient 2-step deterministic sampling without introducing stochastic noise. 


\begin{figure}[t!]
\centering
\includegraphics[width=0.4\textwidth]{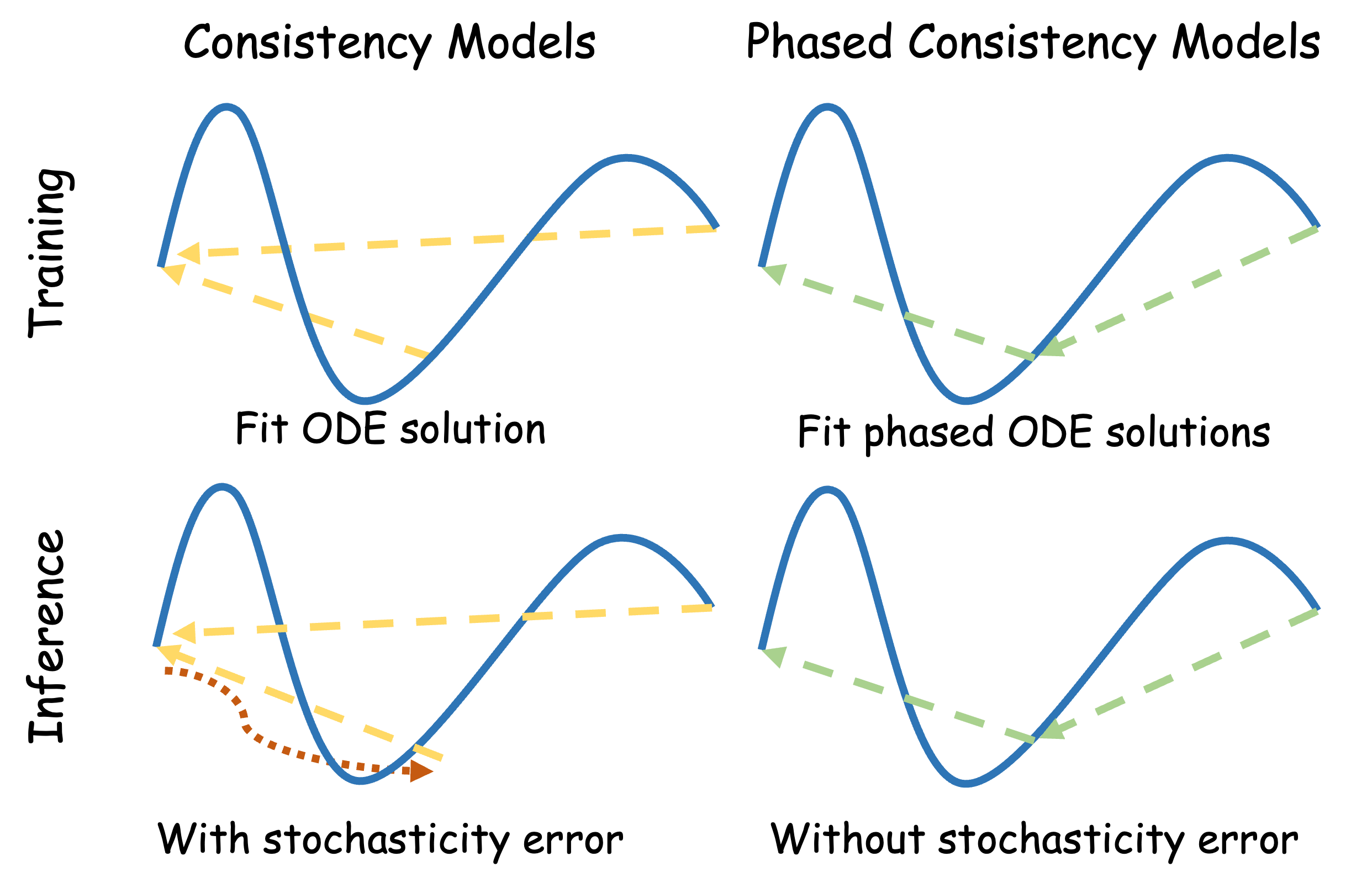}
\caption{Differences between Consistency/Latent Consistency Models and Phased Consistency Models in multi-step sampling.}
\label{fig:lcm-vs-pcm}
\end{figure}

In this paper, we incorporate PCM into the motion synthesis pipeline and propose a new motion synthesis approach, \textbf{MotionPCM}, allowing real-time motion synthesis with improved generation quality (see Figure~\ref{fig:performance}). Similar to MotionLCM, our model is distilled from MLD. However, unlike MotionLCM, we split the entire trajectory into \textit{M} segments where \textit{M} represents the number of inference steps. Instead of forcing all points to the trajectory’s starting point, each point is assigned to an interval, and consistency is enforced only within its respective interval by aligning it to the interval's start. This design allows us to achieve \textit{M}-step sampling deterministically. Furthermore, inspired by PCM and Consistency Trajectory Model (CTM)~\cite{kim2023consistency}, we employ an additional discriminator to enforce distribution consistency, leading to the enhanced performance for motion synthesis in low-step settings.

We summarise our main contributions as follows:
\begin{itemize}
    \item As the first to leverage the multi-interval design of PCM, we propose MotionPCM, an improved pipeline for real-time motion synthesis. 
    \item Introducing a well-designed discriminator to enhance distribution consistency significantly boost the quality of motion synthesis.
    \item Experiments on two widely used datasets demonstrate that our approach achieves state-of-the-art performance in terms of speed and generation quality, requiring fewer than four sampling steps. Additionally, in the CFG scale analysis, our model consistently outperforms the improved version of MotionLCM across various CFG scales and demonstrates significantly better robustness to scale variations.
\end{itemize}
\begin{figure}[t!]
\centering
\includegraphics[width=0.35\textwidth]{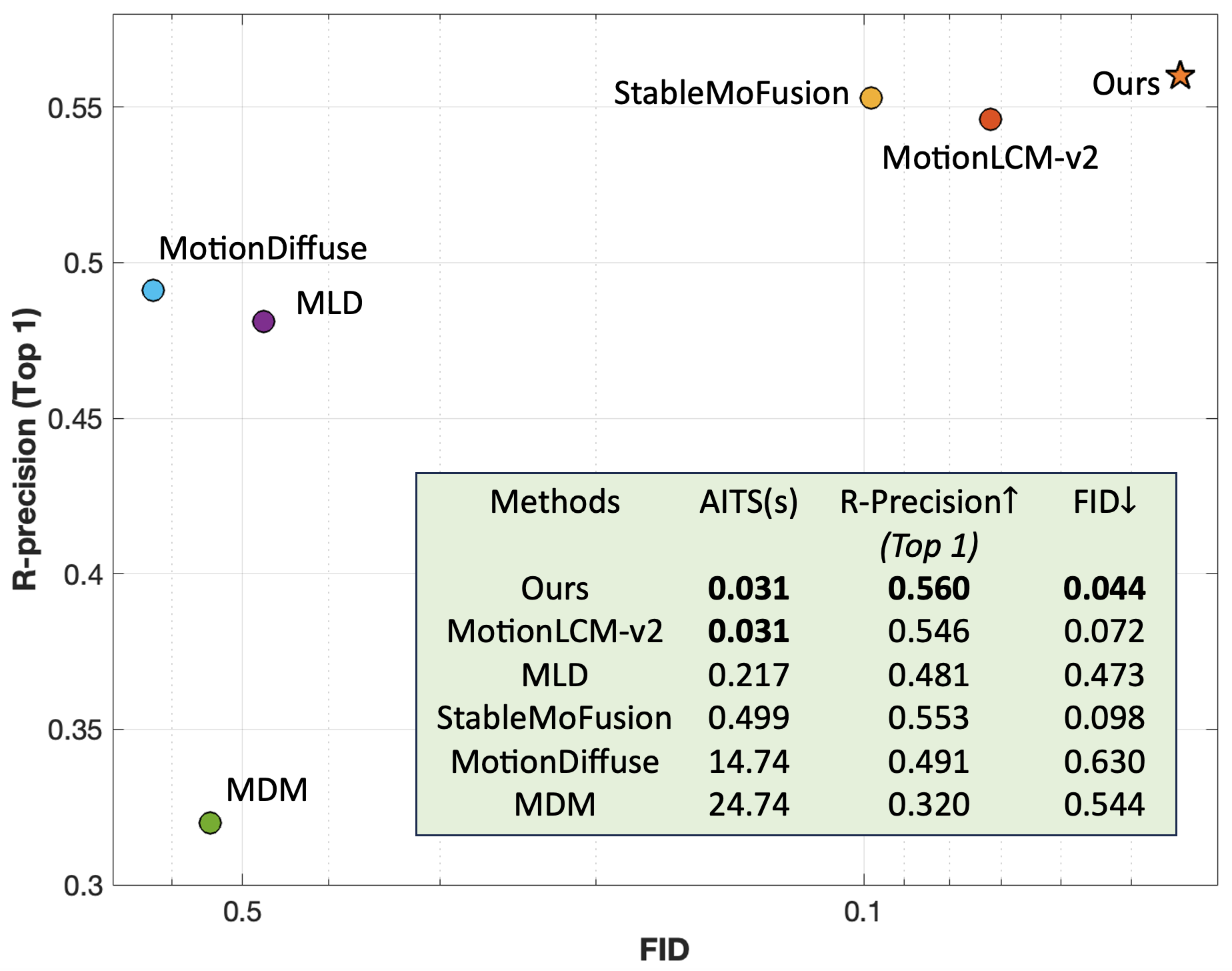}
\caption{Comparison of other motion synthesis methods with our method. AITS represents the time required to generate a motion sequence from a textual description. To facilitate display, the $x$-axis is plotted on a logarithmic scale.}
\label{fig:performance}
\end{figure}

\section{Related Work}
\subsection{Motion Synthesis}

Motion synthesis seeks to generate human motion under various conditions to support a wide range of applications~\cite{zhang2023generating,ghosh2021synthesis,guo2020action2motion, von2018recovering}. With advances in deep learning, the field has shifted towards deep generative models such as Variational Autoencoders (VAEs)~\cite{kingma2013auto} and Generative Adversarial Networks (GANs)~\cite{goodfellow2020generative}. More recently, diffusion models have further transformed motion synthesis by utilising noise-based iterative refinement to generate highly diverse and realistic human motion\cite{zhang2022motiondiffuse,tevet2023human,ho2020denoising,song2020denoising}.

Among the first works to adopt diffusion models for motion synthesis is MotionDiffuse~\cite{zhang2022motiondiffuse} which demonstrates that simply applying DDPM~\cite{ho2020denoising} to the raw motion sequence can outperform prior GAN-based~\cite{lee2019dancing} or VAE-based~\cite{guo2022generating} motion synthesis approaches. However, raw motion sequences are often noisy and redundant, leaving challenges for diffusion models to learn robust correlations between prior and data distributions~\cite{chen2023executing}. To address these issues, MLD~\cite{chen2023executing} integrates a transformer-based VAE with a long skip connection~\cite{ronneberger2015u} to produce representative low-dimensional latent codes from the raw sequences and perform the diffusion process in this latent space. This design improves generation quality whilst greatly reducing computational overhead. Taking MLD as a teacher network for distillation, MotionLCM~\cite{dai2025motionlcm} further reduces the sampling steps from more than 50 steps to a few (less than four) by using CM~\cite{song2023consistency,luo2023latent}. Subsequently, MotionLCM‐v2~\cite{motionlcm-v2} further refines the VAE network in MLD, yielding additional performance gains.

\subsection{Acceleration of Diffusion Models}
Since the speed is a bottleneck in diffusion models, various techniques have been proposed to accelerate them. A popular one is DDIM~\cite{song2020denoising}, which transforms the Markov chains of DDPM into a non-Markov process, thereby enabling skip sampling to accelerate generation. Subsequently, the Consistency Model (CM)~\cite{song2023consistency} advances this further by imposing a consistency constraint that maps noisy inputs directly to clean outputs without iterative denoising. This enables single‐step generation and substantially enhances speed. Latent Consistency Model (LCM)~\cite{luo2023latent} builds on CMs by operating in a latent space, unlike CMs, which work in the pixel domain. This approach enables LCM to handle more challenging tasks, such as text-to-image or text-to-video generation, with improved efficiency. As a result, LCM can serve as a foundational component to accelerate latent diffusion models, which has been employed in motion synthesis domain such as MotionLCM~\cite{dai2025motionlcm}. However, it still faces limitations—particularly in balancing efficiency, consistency, and controllability with varying inference steps, leaving room for further refinement.  Analysing the reasons behind these challenges of LCM, Phased Consistency Model (PCM)~\cite{wang2024phased}) partitions the ODE path into multiple sub-paths and enforces consistency within each sub-path. Additionally, PCM incorporates an adversarial loss to address the low-quality issue of image generation in low-sampling steps. In this paper, we apply PCM to the domain of motion synthesis and propose a new method, \textbf{MotionPCM}, for generating high-quality motion sequences in real time.

\section{Preliminaries}
\label{preliminaries}
Given $x(0) \sim p_0$, the data distribution, a traceable diffusion process w.r.t discrete time $t$ defined by $\alpha_tx_0+\sigma_t\epsilon$ is normally used to transform $x(0)$ to $x(T) \sim p_T$, a prior distribution. Likewise, score-based diffusion models~\cite{song2020score} define a continuous Stochastic Differential Equation (SDE) as the diffusion process:
\begin{equation}
    dx_t = f(x_t,t)dt + g(t)dw_t,
\end{equation}
where $(w_t)_{t \in [0, T]}$ is the standard $d$-dimensional Wiener process, $f: \mathbb{R}^d \times \mathbb{R}^{+}\rightarrow \mathbb{R}^d $ is a $\mathbb{R}^d$-valued function and $g: \mathbb{R}^{+} \rightarrow \mathbb{R}$ is a scalar function. The reverse-time SDE transforms the prior distribution back to the original data distribution. It is expressed as:
\begin{equation}
    dx_t = [f(x_t,t)-g(t)^2\nabla_x\log p_t(x_t)]dt + g(t)d\Bar{w}_t,
\end{equation}
where $\Bar{w}$ is again the standard $d$-dimensional Wiener process in the reversed process, $p_t(x_t)$ represents the probability density function of $x_t$ at time $t$. To estimate the score $\nabla_x \log p_t(x)$, a score-based model $s_\theta(x,t)$ is trained to approximate $\nabla_x \log p_t(x)$ as much as possible. 

There exists a deterministic reversed-time trajectory~\cite{song2020score}, satisfying an ODE, known as the probability flow ODE (PF-ODE): 
\begin{equation}
dx_t = [f(x_t,t)- \frac{1}{2}g(t)^2\nabla_x\log p_t(x_t)]dt.   
\end{equation}

Rather than using $s_\theta(x,t)$ to predict the score, consistency models~\cite{song2023consistency} directly learn a function $f_\theta(\cdot, t)$ to predict the solution of PF-ODE by mapping any points in the ODE trajectory to the origin of this trajectory, $x_\epsilon$, where $\epsilon$ is a fixed small positive number. Formally, for all $t, t' \in [ \epsilon, T] $, it holds that:
\begin{equation}
    f_\theta(x_t,t) = f_\theta(x_{t'}, t') = x_\epsilon.
\end{equation}

However, for multi-step sampling, CM will introduce random noise at each step since generating intermediate states along the sampling trajectory involves reintroducing noise, which accumulates and causes inconsistencies in the final output. To address this issue, PCM~\cite{wang2024phased} splits the solution trajectory of PF-ODE into multiple sub-intervals with \textit{M}+1 edge timesteps $s_0$, $s_1$, ..., $s_M$, where $s_0=\epsilon$ and $s_M=T$. Each sub-trajectory is treated as an independent CM, with a consistency function $f^m(\cdot,\cdot)$ defined as: for all $t, t' \in [s_m, s_{m+1}]$,
\begin{equation}
    f^m(x_t,t)= f^m(x_t',t') = x_{s_m}.
\label{eq5}
\end{equation}

\begin{figure*}[t]
    \centering
    \includegraphics[width=0.9 \linewidth]{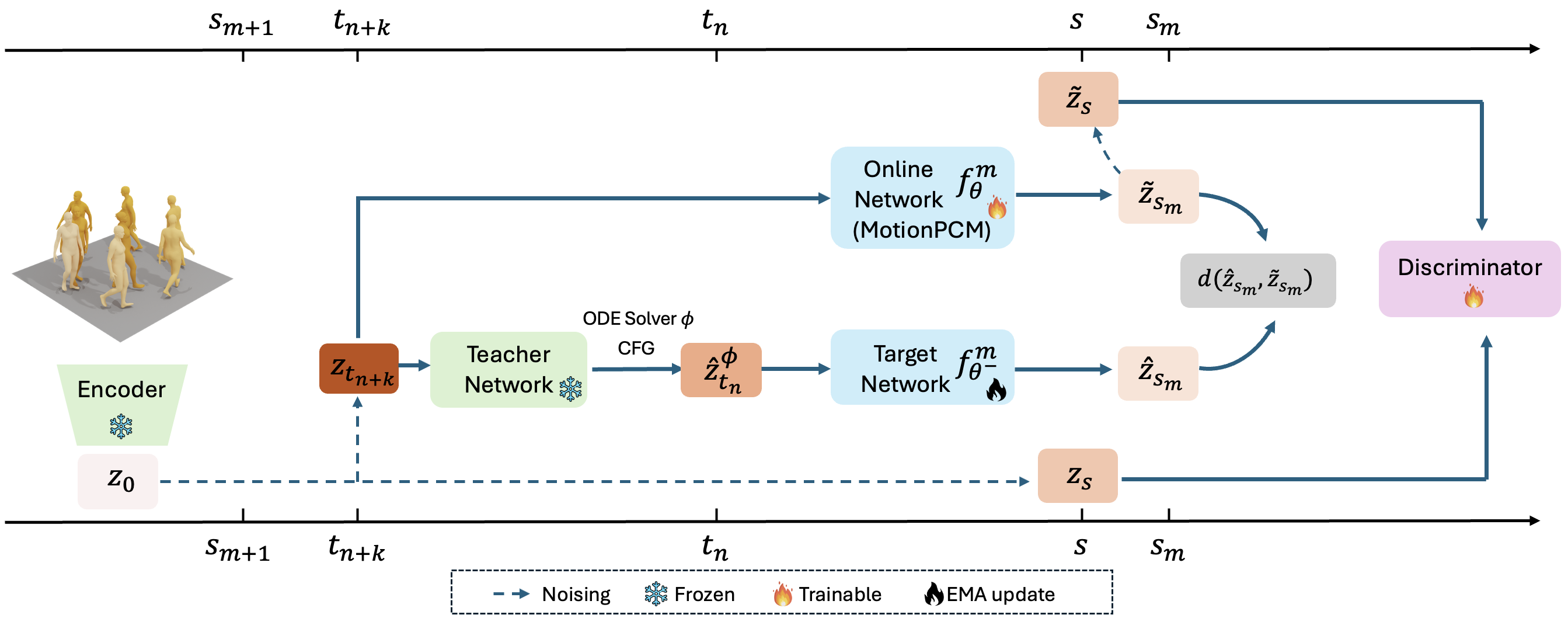}
    \caption{The pipeline of our proposed MotionPCM. In the training phase, a pre-trained VAE encodes the motion sequence to a latent code $z_0$,  which goes $n+k$ diffusion steps to produce $z_{t_{n+k}}$. $z_t{_{n+k}}$ is denoised to $\hat{z}_{t_n}$ through a teacher network and an ODE solver. $\hat{z}_{t_n}$ is passed through a target network  to predict $\hat{z}_{s_m}$. Simultaneously, $z_{t_{n+k}}$ is denoised to $\tilde{z}{_{s_m}}$ through the online network directly. A consistency loss within the time interval $[s_m,s_{m+1}]$ is applied by comparing $\tilde{z}{_{s_m}}$ and $\hat{z}_{s_m}$. Additionally, adversarial training is performed by introducing different noises to $\tilde{z}_{s_m}$ and $z_0$, generating  $\tilde{z}_s$ and $z_s$ respectively. These are then compared through a discriminator to enforce realism and improve model performance. The trainable components include the online network and the discriminator, whereas the encoder and teacher networks remain frozen during training. The target network is updated using the exponential moving average.}
    \label{pipeline}
\end{figure*}

In \cite{lu2022dpm}, the explicit transition formula for the PF-ODE solution from time $t$ to 
$s$ is given by:
\begin{equation}
    x_s = \frac{\alpha_s}{\alpha_t} x_t + \alpha_s \int_{\lambda_t}^{\lambda_s} e^{-\lambda} \sigma_{\tau(\lambda)} \nabla \log P_{\tau(\lambda)}(x_{\tau(\lambda)}) \, d\lambda,
\end{equation}
where $\lambda_t = \ln{\frac{\alpha_t}{\sigma_t}}$ and $\tau$ is the inverse function of $t \mapsto \lambda_t$. Using an epsilon (noise) prediction network $\epsilon_\theta(x_t,t)$, this solution can be approximated as: 
\begin{equation}
    x_s = \frac{\alpha_s}{\alpha_t} x_t - \alpha_s \int_{\lambda_t}^{\lambda_s} e^{-\lambda} \epsilon_\theta(x_{\tau(\lambda)}, \tau(\lambda)) \, d\lambda.
\end{equation}
The solution needs to know the noise predictions throughout the entire interval between time $s$ and $t$, while consistency models can only access $x_t$ for a single inference. To address this, \cite{wang2024phased} parameterises $F_\theta(x,t,s)$ as follows: 
\begin{equation}
   F_\theta(x,t,s) =x_s = \frac{\alpha_s}{\alpha_t} x_t - \alpha_s \hat{\epsilon}_\theta(x_t, t) \int_{\lambda_t}^{\lambda_s} e^{-\lambda} \, d\lambda, 
\label{F_theta}
\end{equation}

Eq.~\eqref{F_theta} shares the same format as DDIM (see proof in the supplementary material). To satisfy the boundary condition of each sub-trajectory of the PCM, i.e., $f^m(x_{s_m},s_m) = x_{s_m}$, a parameterised form $f_\theta^m$ below is typically employed: 
\begin{equation}
    f^m_\theta(x_t,t) = c_\text{skip}^m(t)x_t + c_\text{out}^m(t)F_\theta(x_t,t,s_m),
\end{equation}
where $c_\text{skip}^m(t)$ gradually increases to 1 and $c_\text{out}^m(t)$ progressively decays to 0 as $t$ decreases over the time interval from $s_{m+1}$  to $s_m$. In fact, in Eq.~\eqref{F_theta}, the boundary condition $F_\theta(x_{s_m},s_m,s_m)= \frac{\alpha_{s_m}}{\alpha_{s_m}}x_{s_m}-0 = x_{s_m}$ is inherently satisfied. Hence, it can be reduced as below for direct use:
\begin{equation}
    f^m_\theta(x,t)=F_\theta(x,t,s_m).
\label{eq9}
\end{equation}

\section{Method}
As shown in Figure~\ref{pipeline}, we introduce MotionPCM, a novel framework for real-time motion synthesis. To provide a clear understanding of our method, we divide our method into four key components. Section~\ref{VAE}
explains the use of a Variational Autoencoder (VAE) and latent diffusion model as pre-training models to initialise the framework. Section~\ref{motionpcm} describes the integration of the phased consistency model within the motion synthesis pipeline. Section~\ref{disc} details the design and role of the discriminator, which provides adversarial loss to enforce distribution consistency while improving overall performance. Section~\ref{inference} illustrates the process of generating motion sequences from a prior distribution using PCM during inference.

\subsection{VAE and Latent Diffusion Model for Motion Data Pre-training}
\label{VAE}

Following MLD~\cite{chen2023executing}, we first employ a transformer-based VAE~\cite{kingma2013auto} to compress motion sequences into a lower-dimensional latent space. More specifically, the encoder maps motion sequences $x \in \mathbb{R}^{L\times d}$, where $L$ is the frame length and $d$ is the number of features, to a latent code $z_0 = \mathcal{E}(x) \in \mathbb{R}^{N\times d'} $ where $N$ and $d'$ are much smaller than $L$ and $d$. Then a decoder is used to reconstruct the motion sequence $\hat{x}=\mathcal{D}(z_0)$. This process significantly reduces the dimensionality of the motion data, accelerating latent diffusion training in the second stage.  Building upon this, MotionLCM-V2~\cite{motionlcm-v2} introduces an improved VAE to enhance the representation quality of motion data. In our work, we adopt the improved VAE proposed by MotionLCM-V2 as the backbone. 

In the second stage, we train a latent diffusion model in the latent space learnt by the enhanced VAE, following MLD. Here, the latent diffusion model is an epsilon prediction network. Readers are referred to \cite{chen2023executing} for more details. This trained diffusion model will be used as the teacher network to guide the distillation process in our work.

\subsection{Accelerating Motion Synthesis via PCM}
\label{motionpcm}
\textbf{Definition.} Following the definition of PCM~\cite{wang2024phased}, we split our solution trajectory $z$ in the latent space into $M$ sub-trajectories, with edge timestep $\{s_m\mid m=0,1,2,\cdots,M\}$ where $s_0 = \epsilon$ and $s_M = T$. In each sub-time interval $[s_m, s_{m+1}]$, the consistency function $f^m$ is defined as Eq.~\eqref{eq5}. We train $f_\theta^m$ in Eq.~\eqref{eq9} to estimate $f^m$, applying consistency constraint on each sub-trajectory, i.e., $f_\theta^m(z_t,t)=f_\theta^m(z_t', t')=z_{s_m}$ for all $t, t' \in [s_m, s_{m+1}] $ and $m\in [0, M) \cap \mathbb{Z}$.

\textbf{PCM Consistency distillation.} 
Once we obtain the pre-trained VAEs and latent diffusion model from Section~\ref{VAE}, we extract the representative latent code $z_0$ from VAE and use this latent diffusion model as our frozen teacher network to distill our MotionPCM model, i.e., online network in Figure~\ref{pipeline}. Following~\cite{dai2025motionlcm}, online network ($f_\theta$) is initialised from the teacher network with trainable weights $\theta$, while the target network ($f_{\theta^-}$) is also initialised from the teacher network but updated using Exponential Moving Average (EMA) of the online network's parameters. We obtain $z_{t_{n+k}}$ by applying the forward diffusion with $n+k$ steps to $z_0$, positioning it within the time interval $[s_m, s_{m+1}]$.

This work focuses on text-conditioned motion synthesis, where Classifier-free Guidance (CFG)~\cite{ho2022classifier} is frequently used to effectively align the generative results of diffusion models with the text conditions. Following previous works~\cite{chen2023executing,dai2025motionlcm,luo2023latent}, we also employ CFG in our framework. To distinguish $\hat\epsilon_\theta$ used in the consistency model, we use $\tilde\epsilon_\theta$ to represent the diffusion model, which corresponds to our teacher network. It can be expressed as:   
\begin{equation}
\begin{split}
    \tilde\epsilon(z_{t_{n+k}}, t_{n+k}, \omega, c) = &\ (1 + \omega)\tilde\epsilon(z_{t_{n+k}}, t_{n+k}, c) \\
    &- \omega\tilde\epsilon(z_{t_{n+k}}, t_{n+k}, \emptyset)
\end{split}
\end{equation}
where $c$ denotes text condition, the guidance scale $\omega$ is uniformly sampled from $[\omega_{min},\omega_{max}]$, and $\emptyset$ indicates an empty condition (i.e., a blank text input). $\hat{z}_{t_n}^\phi$ is then estimated from $z_{t_{n+k}}$ by performing k-step skip using $\tilde\epsilon(z_t{_{n+k}},t_{n+k},\omega,c)$, followed by an ODE solver $\phi$, such as DDIM~\cite{song2020denoising}. To efficiently perform the guided distillation, \cite{dai2025motionlcm,luo2023latent} add $\omega$ into an augmented consistency function $f_\theta(z_t,t,\omega,c) \mapsto z_0$. Similarly, in our work, we extend our phased consistency function to $f^m_\theta(z_t,t,\omega,c) \mapsto z_{s_m}$.   

Following CMs~\cite{song2023consistency,wang2024phased}, the phased consistency distillation loss is then defined as follows:
\begin{equation}
    \mathcal{L}_{PCD}(\theta,\theta^-) = \mathbb{E}[d(f_\theta^m(z_{t_{n+k}},t_{n+k},\omega,c),f_{\theta^-}^m(z_{t_n}^\phi,t_n,\omega,c))]
\end{equation}
where $d$ is Huber loss~\cite{huber1992robust} in our implementation. The online network's parameters $\theta$ are updated by minimising $\mathcal{L}_{PCD}$ through the standard gradient descent algorithms, such as AdamW~\cite{loshchilov2017decoupled}. Meanwhile, as mentioned earlier, the target network's parameters, $\theta^-$ is updated in EMA fashion: $\theta^-=\mu \theta^- + (1-\mu)\theta$.  

\subsection{Discriminator}
\label{disc}

Inspired by the work of~\cite{kim2023consistency,wang2024phased}, which demonstrated that incorporating an adversarial loss from a discriminator can enhance the image generation quality of diffusion models in few-step sampling settings, we also integrate an additional discriminator into our motion synthesis pipeline. 

As illustrated in Figure~\ref{pipeline}, a pair of $\tilde{z}_s$ and $z_s$ is sent to the discriminator. Specifically, we compute the solutions $\tilde{z}_{s_m} = f_\theta^m(z_{t_{n+k}}, t_{n+k}, \omega, c)$ and $\hat{z}_{s_m} = f_{\theta^-}^m(z_{t_n}^\phi, t_n, \omega, c)$ using the online and target networks, respectively. Following~\cite{wang2024phased}, noise is further added to $\tilde{z}_{s_m}$, generating $\tilde{z}_s$ for $s \in [s_m, s_{m+1}]$. 

However, unlike~\cite{wang2024phased}, which derives $z_s$ from $\hat{z}_{s_m}$, we obtain $z_s$ directly from $z_0$. This modification aligns with \cite{kim2023consistency}, which emphasises that leveraging direct training signals from data labels is key to achieving optimal performance.

\begin{figure}[h]
    \centering
    \includegraphics[width=0.95 \linewidth]{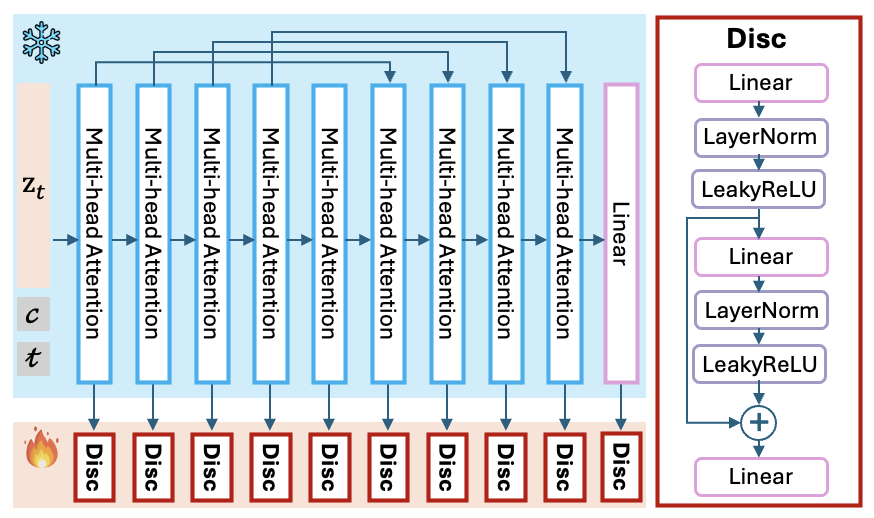}
    \caption{Detailed structure of discriminator in our proposed MotionPCM model.}
    \label{fig:disc}
\end{figure}

In Figure~\ref{fig:disc}, we illustrate the detailed structure of the discriminator in our proposed MotionPCM model. For convenience, we use \( z_t \), \( c \), and \( t \) to denote the input to the discriminator, which, in practice, can be either \( \tilde{z}_s \) or \( z_s \). The discriminator comprises two components: a frozen teacher network, consisting of multiple multi-head attention layers, and a trainable discrimination module, which includes linear layers, layer normalisation, activation functions, and residual connections.

During training, inspired by~\cite{wang2024phased}, we apply an adversarial loss to the output of the discriminator as follows:
\begin{equation}
    \mathcal{L}_{adv} = \text{ReLU}(1+f_D(z_s, s,c)) + \text{ReLU}(1-f_D(\tilde{z}_s,s,c))
\end{equation}
where $f_D$ is the discriminator, ReLU is an non-linear activation function. The model is trained with $\mathcal{L}_{adv}$ using a min-max strategy~\cite{goodfellow2020generative}.

The joint loss combining phased consistency distillation loss and adversarial loss is expressed as:
\begin{equation}
    \mathcal{L}_{all} = \mathcal{L}_{PCD} + \lambda\mathcal{L}_{adv}
\end{equation}
where $\lambda$ is a hyper-parameter. 

\subsection{Inference} 
\label{inference}
During inference, we sample $z_T$  from a prior distribution, such as standard normal distribution $\mathcal{N}(0,1)$. Based on the transition map defined in~\cite{wang2024phased}: $ f_{m,m'}(x_t,t) = f^{m'} \big( \cdots f^{m-2} \big( f^{m-1} \big( f^m(\mathbf{x}_t, t), s_m \big), s_{m-1} \big) \cdots, s_{m'} \big)$ which transform any point $x_t$ on $m$-th sub-trajectory to the solution point of $m'$-th trajectory, we can get the solution estimation $\hat{z}_0=f_{M-1,0} (x_T,T)$. Finally, the human motion sequence $\hat{x}$ is generated through the decoder  $\mathcal{D}(\hat{z}_0)$. 
\begin{table*}[h!]
\centering
\scriptsize
\setlength{\tabcolsep}{4pt} 
\renewcommand{\arraystretch}{1.2} 
\begin{tabular}{@{}llcccccccc@{}}
\toprule
\multirow{2}*{\textbf{Datasets}} &
\multirow{2}*{\textbf{Methods}} &
\multirow{2}*{\textbf{AITS $\downarrow$}} &
\multicolumn{3}{c}{\textbf{R-Precision $\uparrow$}} &
\multirow{2}*{\textbf{FID $\downarrow$}} &
\multirow{2}*{\textbf{MM Dist $\downarrow$}} &
\multirow{2}*{\textbf{Diversity $\rightarrow$}} &
\multirow{2}*{\textbf{MModality $\uparrow$}} \\ 
\cmidrule(lr){4-6}
& & & \textbf{Top 1} & \textbf{Top 2} & \textbf{Top 3} & & & & \\ 
\midrule
\multirow{16}*{\textbf{HumanML3D}} & 
Real & - & $0.511^{\pm.003}$ & $0.703^{\pm.003}$ & $0.797^{\pm.002}$ & $0.002^{\pm.000}$ & $2.974^{\pm.008}$ & $9.503^{\pm.065}$ & - \\
\cmidrule(lr){2-10}

&TM2T~\cite{guo2022tm2t} & $0.760$ & $0.424^{\pm.003}$ & $0.618^{\pm.003}$ & $0.729^{\pm.002}$ & $1.501^{\pm.017}$ & $3.467^{\pm.011}$ & $8.589^{\pm.076}$ & ${\underline{2.424}}^{\pm.093}$ \\
&MotionDiffuse~\cite{zhang2022motiondiffuse} & $14.74$ & $0.491^{\pm.001}$ & $0.681^{\pm.001}$ & $0.782^{\pm.001}$ & $0.630^{\pm.011}$ & $3.113^{\pm.001}$ & $9.410^{\pm.049}$ & $1.553^{\pm.072}$ \\
&MDM~\cite{tevet2023human} & $24.74$ & $0.320^{\pm.005}$ & $0.498^{\pm.004}$ & $0.611^{\pm.007}$ & $0.544^{\pm.044}$ & $5.556^{\pm.027}$ & ${{9.559}}^{\pm.086}$ & $\textbf{2.799}^{\pm.072}$ \\
&MLD~\cite{chen2023executing} & $0.217$ & $0.481^{\pm.003}$ & $0.673^{\pm.003}$ & $0.772^{\pm.002}$ & $0.473^{\pm.013}$ & $3.196^{\pm.010}$ & $9.724^{\pm.082}$ & $2.413^{\pm.079}$ \\
&T2M-GPT~\cite{zhang2023generating} & $0.380$ & $0.492^{\pm.003}$ & $0.679^{\pm.002}$ & $0.775^{\pm.002}$ & $0.141^{\pm.005}$ & $3.121^{\pm.009}$ & $9.722^{\pm.082}$ & $1.831^{\pm.048}$ \\
&ReMoDiffuse~\cite{zhang2023remodiffuse} & $0.624$ & $0.510^{\pm.005}$ & $0.698^{\pm.006}$ & $0.795^{\pm.004}$ & $0.103^{\pm.004}$ & $2.974^{\pm.016}$ & $9.018^{\pm.075}$ & $1.795^{\pm.043}$ \\
&StableMoFusion~\cite{huang2024stablemofusion} & $0.499$ & $0.553^{\pm.003}$ & ${0.748}^{\pm.002}$ & ${{0.841}}^{\pm.002}$ & $0.098^{\pm.003}$ & $2.770^{\pm.006}$ & $9.748^{\pm.092}$ & $1.774^{\pm.051}$ \\
&B2A-HDM~\cite{xie2024towards} & - & $0.511^{\pm.002}$ & $0.699^{\pm.002}$ & $0.791^{\pm.002}$ & $0.084^{\pm.004}$ & $3.020^{\pm.010}$ & $\textbf{9.526}^{\pm.080}$ & $1.914^{\pm.078}$ \\
&MotionLCM~\cite{dai2025motionlcm} (4-step) & $0.043$ & $0.502^{\pm.003}$ & $0.698^{\pm.002}$ & $0.798^{\pm.002}$ & $0.304^{\pm.012}$ & $3.012^{\pm.007}$ & $9.607^{\pm.066}$ & $2.259^{\pm.092}$ \\
&MotionLCM-V2~\cite{motionlcm-v2} (1-step) & $\textbf{0.031}$ & $0.546^{\pm.003}$ & $0.743^{\pm.002}$ & $0.837^{\pm.002}$ & $0.072^{\pm.003}$ & ${2.767}^{\pm.007}$ & $9.577^{\pm.070}$ & $1.858^{\pm.056}$ \\
&MotionLCM-V2~\cite{motionlcm-v2} (2-step) & $0.038$ & $0.551^{\pm.003}$ & $0.745^{\pm.002}$ & $0.836^{\pm.002}$ & ${0.049}^{\pm.003}$ & ${2.765}^{\pm.008}$ & $9.584^{\pm.066}$ & $1.833^{\pm.052}$ \\
&MotionLCM-V2~\cite{motionlcm-v2} (4-step) & $0.050$ & ${0.553}^{\pm.003}$ & $0.746^{\pm.002}$ & $0.837^{\pm.002}$ & $0.056^{\pm.003}$ & $2.773^{\pm.009}$ & $9.598^{\pm.067}$ & $1.758^{\pm.056}$ \\
\cmidrule(lr){2-10}
&Ours (1-step) & \textbf{0.031} &  $\textbf{0.560}^{\pm.002}$ &$\textbf{0.752}^{\pm.003}$ & $\textbf{0.844}^{\pm.002}$& ${0.044}^{\pm.003}$ &$\textbf{2.711}^{\pm.008}$ & $\underline{9.559}^{\pm.081}$& $1.772^{\pm.067}$\\
&Ours (2-step) & 0.036 &  ${0.555}^{\pm.002}$ &${0.749}^{\pm.002}$ & ${0.839}^{\pm.002}$& $\underline{0.033}^{\pm.002}$ &${2.739}^{\pm.007}$ & $9.618^{\pm.088}$& $1.760^{\pm.068}$\\
&Ours (4-step) & 0.045 & $\underline{0.559}^{\pm.003}$ & $\textbf{0.752}^{\pm.003}$ & $\underline{0.842}^{\pm.002}$ & $\textbf{0.030}^{\pm.002}$ & $\underline{2.716}^{\pm.008}$ & ${9.575}^{\pm.082}$ & $1.714^{\pm.062}$ \\
\toprule

\multirow{11}*{\textbf{KIT-ML}} & 
Real & - & $0.424^{\pm.005}$ & $0.649^{\pm.006}$ & $0.649^{\pm.006}$ & $0.031^{\pm.004}$ & $2.788^{\pm.012}$ & $ 11.08^{\pm.097}$ & - \\
\cmidrule(lr){2-10}

&TM2T~\cite{guo2022tm2t} & $0.760$ & $ 0.280^{\pm.005}$ & $0.463^{\pm.006}$ & $0.587^{\pm.005}$ & $3.599^{\pm.153}$ & $4.591^{\pm.026}$ & - & $\textbf{3.292}^{\pm.081}$ \\
&MotionDiffuse~\cite{zhang2022motiondiffuse} & $14.74$ & $0.417^{\pm.004}$ & $0.621^{\pm.004}$ & $0.739^{\pm.004}$ & $1.954^{\pm.062}$ & $2.958^{\pm.005}$ & $\textbf{11.10}^{\pm.143}$ & $0.730^{\pm.013}$ \\
&MDM~\cite{tevet2023human} & $24.74$ & $0.164^{\pm.004}$ & $0.291^{\pm.004}$ & $0.396^{\pm.004}$ & $0.497^{\pm.021}$ & $9.191^{\pm.022}$ & $10.847^{\pm.109}$ & $1.907^{\pm.214}$ \\
&MLD~\cite{chen2023executing} & $0.217$ & $0.390^{\pm.008}$ & $0.609^{\pm.008}$ & $0.734^{\pm.007}$ & $0.404^{\pm.027}$ & $3.204^{\pm.027}$ & $10.80^{\pm.117}$ & $\underline{2.192}^{\pm.071}$ \\
&T2M-GPT~\cite{zhang2023generating} & $0.380$ & $ 0.416^{\pm.006}$ & $0.627^{\pm.006}$ & $0.745^{\pm.006}$ & $0.514^{\pm.029}$ & $3.007^{\pm.023}$ & ${10.921}^{\pm.108}$ & $1.570^{\pm.039}$ \\
&ReMoDiffuse~\cite{zhang2023remodiffuse} & $0.624$ & $0.427^{\pm.014}$ & $0.641^{\pm.004}$ & $0.765^{\pm.055}$ & $\textbf{0.155}^{\pm.006}$ & $\textbf{2.814}^{\pm.012}$ & $10.80^{\pm.105}$ & $1.239^{\pm.028}$ \\
&StableMoFusion~\cite{huang2024stablemofusion} & $0.499$ & $\textbf{0.445}^{\pm.006}$ & ${0.660}^{\pm.005}$ & ${0.782}^{\pm.004}$ & $\underline{0.258}^{\pm.029}$ &  - & $\underline{10.936}^{\pm.077}$ & $1.362^{\pm.062}$ \\
&B2A-HDM~\cite{xie2024towards} & - & $0.436^{\pm.006}$ & $0.653^{\pm.006}$ & $0.773^{\pm.005}$ & $0.367^{\pm.020}$ & $2.946^{\pm.024}$ & ${10.86}^{\pm.124}$ & $1.291^{\pm.047}$ \\
\cmidrule(lr){2-10}
&Ours (1-step)& $\textbf{0.031}$ & $0.433^{\pm.007}$& $0.654^{\pm.007}$ & $0.781^{\pm.008}$ & $0.355^{\pm.011}$& $\underline{2.820}^{\pm.022}$ & $10.788^{\pm.078}$ & $1.337^{\pm.047}$\\
&Ours (2-step)& $\underline{0.036}$ & ${0.437}^{\pm.005}$& $\textbf{0.664}^{\pm.005}$ & $\underline{0.787}^{\pm.006}$ & $0.294^{\pm.011}$ & ${2.844}^{\pm.018}$ & $10.827^{\pm.094}$ & $1.254^{\pm.050}$\\
&Ours (4-step)& $0.045$ & $\underline{0.443}^{\pm.005}$& $\textbf{0.664}^{\pm.004}$ & $\textbf{0.789}^{\pm.005}$ & $0.336^{\pm.013}$& $2.881^{\pm.023}$ & $10.758^{\pm.096}$ & $1.258^{\pm.056}$\\
\bottomrule

\end{tabular}
\caption{Performance comparison of various methods across multiple metrics on HumanML3D and KIT-ML dataset. The best results are in \textbf{bold}, and the second best results are \underline{underlined}. $\downarrow$ means the lower is better while $\uparrow$ means the higher is better. $\rightarrow$ represents the closer to the value of Real is better. Results on the KIT-ML dataset are unavailable for the MotionLCM series.}
\label{tab:results}
\end{table*}

\section{Numerical Experiments}

\textbf{Datasets.}
We base our experiments on the widely used \textbf{HumanML3D} dataset~\cite{guo2022generating} and \textbf{KIT Motion-Language (KIT-ML)} dataset~\cite{Plappert2016KIT}. HumanML3D comprises 14,616 distinct human motion sequences accompanied by 44,970 textual annotations, whereas the KIT-ML dataset contains 6,353 textual annotations and 3,911 motions. In line with previous studies~\cite{dai2025motionlcm,chen2023executing,guo2022generating}, we employ a redundant motion representation that includes root velocity, root height, local joint positions, velocities, root-space rotations, and foot-contact binary indicators to ensure a fair comparison.

\textbf{Evaluation metrics.}
Following~\cite{guo2022generating,chen2023executing}, we evaluate our model using the following metrics: (1) \textbf{Average Inference Time per Sentence (AITS)}, which measures the time required to generate a motion sequence from a textual description, with lower values indicating faster inference; (2) \textbf{R-Precision}, capturing how accurately generated motions match their text prompts by checking whether the top-ranked motions align with the given descriptions, where higher scores indicate better accuracy; (3) \textbf{Frechet Inception Distance (FID)}, assessing how closely the distribution of generated motions resembles real data, where lower scores indicate better quality; (4) \textbf{Multimodal Distance (MM Dist)}, quantifying how well the motion features align with text features, with lower values signalling a tighter match; (5) \textbf{Diversity}, which calculates variance through motion features to indicate the variety of generated motions across different samples; and (6) \textbf{MultiModality (MModality)}, measuring generation diversity conditioned on the same text by evaluating how many distinct yet valid motions can be produced for a single prompt.

\begin{figure*}[t!]
\centering
\resizebox{0.8\textwidth}{!}{%
\renewcommand{\arraystretch}{1.2} 
\begin{tabular}{cccc}
\hline
\parbox[c][1cm][c]{0.2\linewidth}{\centering \textbf{Real}} & 
\parbox[c][1cm][c]{0.2\linewidth}{\centering \textbf{MotionPCM (Ours)}} & 
\parbox[c][1cm][c]{0.25\linewidth}{\centering \textbf{MotionLCM-V2}} & 
\parbox[c][1cm][c]{0.15\linewidth}{\centering \textbf{MLD}} \\ \hline

\multicolumn{4}{c}{\small{\textit{The rigs {\color{red}walk forward}, then {\color{red}turn around}, and {\color{red}continue walking} before stopping where he started.}}} \\

\includegraphics[height=2.5cm,keepaspectratio]{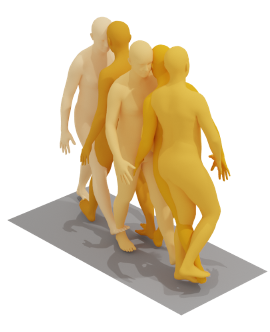} &
  \includegraphics[height=2.5cm,keepaspectratio]{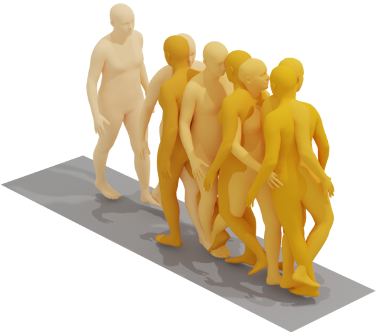} &
  \includegraphics[height=2.5cm,keepaspectratio]{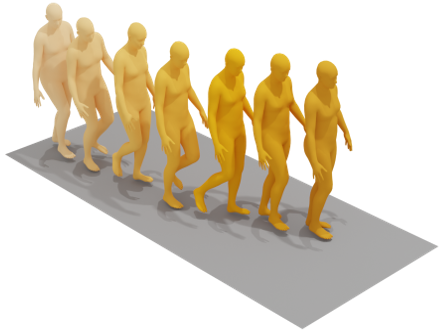} &
  \includegraphics[height=2.5cm,keepaspectratio]{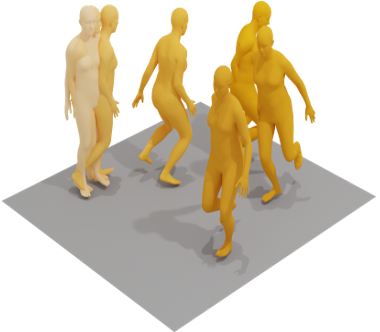} \\ 
\hline

\multicolumn{4}{c}{\small{\textit{A person {\color{red}walks} with a limp leg.}}} \\
\includegraphics[height=2.5cm,keepaspectratio]{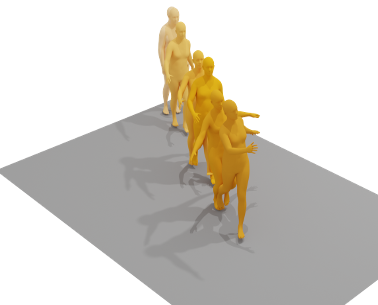} &
  \includegraphics[height=2.5cm,keepaspectratio]{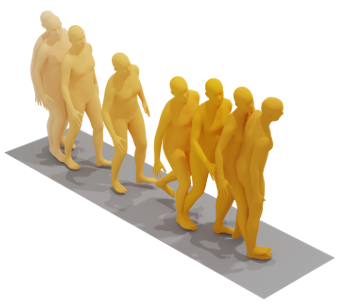} &
  \includegraphics[height=2.5cm,keepaspectratio]{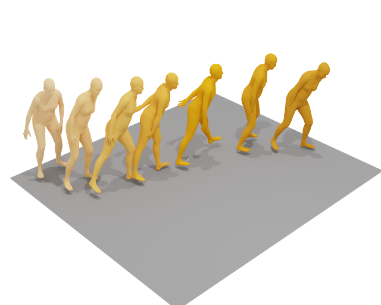} &
  \includegraphics[height=2.5cm,keepaspectratio]{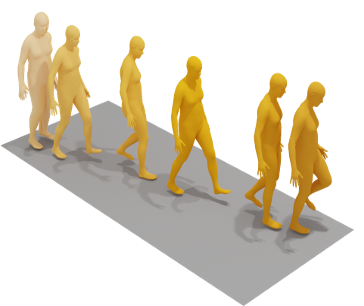} \\ 
\hline

\multicolumn{4}{c}{\small{\textit{A man {\color{red}walks forward} a few steps, {\color{red}raise his left hand} to his face, then {\color{red}continue walking in a circle}.}}} \\
\includegraphics[height=2.5cm,keepaspectratio]{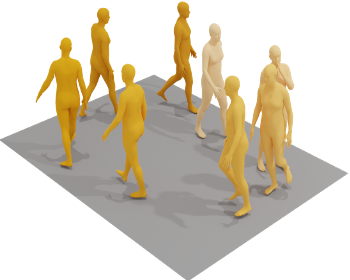} &
  \includegraphics[height=2.5cm,keepaspectratio]{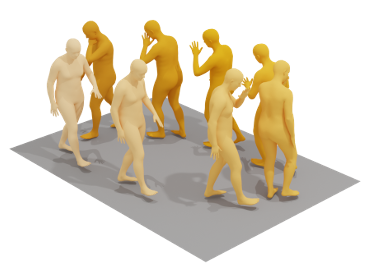} &
  \includegraphics[height=2.5cm,keepaspectratio]{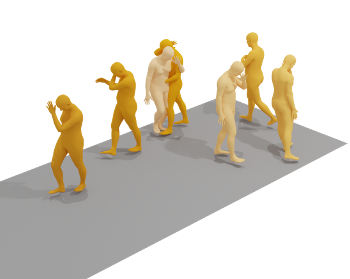} &
  \includegraphics[height=2.5cm,keepaspectratio]{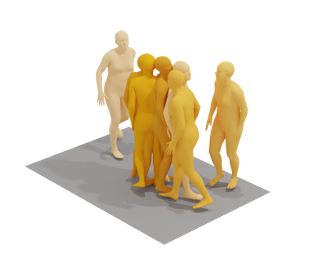} \\ 
\hline

\end{tabular}
}
\caption{Qualitative comparison of motion synthesis methods. Lighter colours represent earlier time points.}
\label{fig:visual_comparison}
\end{figure*}

\textbf{Implementation details.} We conduct our experiments on a single NVIDIA RTX 6000 GPU with a batch size of 128 motion sequences fed into our model. The model is trained over 384K iterations using an AdamW~\cite{loshchilov2017decoupled} optimiser with parameters $\beta_1= 0.9$, $\beta_2=0.999$ and an initial learning rate $2e-4$, which follows a cosine decay schedule. For the loss function, we set $\lambda = 0.1$. The CFG scale samples between $w_{min} =5$, and $w_{max}=15$ in the training phase and $\omega = 14$ is used in the test phase for all experiments. The Exponential Moving Average (EMA) rate is set as $\mu =0.95$. Additionally, we use DDIM~\cite{song2020denoising} as our ODE solver with a skip step $k =100$. Sentence-T5~\cite{ni2021sentence} is used to encode the text condition. Following~\cite{motionlcm-v2}, the latent code is compressed to a $\mathbb{R}^{16 \times 32}$ representation.

\subsection{Text-to-motion synthesis}

In this section, we evaluate the performance of our proposed MotionPCM method on the text-to-motion task. Following~\cite{dai2025motionlcm,guo2022generating}, we conduct each experiment 20 times to establish the results within a confidence interval 95\% on both HumanML3D and KIT-ML datasets.   

On the large-scale \textbf{HumanML3D} dataset, we thoroughly compare MotionLCM-v2~\cite{motionlcm-v2}, the improved version of MotionLCM~\cite{dai2025motionlcm}, with our method in different sampling steps, as it is the closest competitor to our method. As illustrated in Table~\ref{tab:results}, our method achieves a sampling speed comparable to MotionLCM-v2 (AITS) while significantly outperforms it in R-Precision Top-1, Top-2 and Top-3, FID, and MM Dist across different sampling steps. Furthermore, our 1-step and 4-step variants also outperform the counterparts of MotionLCM-V2 in terms of Diversity. These results demonstrate that our model is more superior than MotionLCM-V2 although both support real-time inference.  

Compared to other approaches on \textbf{HumanML3D} dataset, the inference speed of our method with 1-step sampling (over 30 frames per second) surpasses all alternatives by a large margin, demonstrating its time efficiency. Regarding R-Precision, our 1-step variant achieves the highest accuracy across Top-1, Top-2 and Top-3 metrics compared to other approaches. This consistent improvement highlights the reliability of MotionPCM in accurately aligning generated motions with textual descriptions. Similarly, in FID and MM Distance metrics, our method achieves the best scores with its 4-step and 1-step variants respectively. These results further highlight MotionPCM's capability to generate high-quality and semantically consistent motions. Although our method does not achieve the best scores in Diversity compared to \cite{xie2024towards}, our 1-step variant ranks second. 

For the \textbf{KIT-ML} dataset, our model still achieves the best or second best performance on Top 1, Top 2 and Top 3 of R-Precision with our 2-step and 4-step variants, underscoring its strong text-motion alignment. Although the FID score of our 2-step variant ranks third compared to \cite{huang2024stablemofusion,zhang2023remodiffuse}, our inference speed is around 14 times and 17 times faster than theirs, respectively. Moreover, our 2-step variant ranks second in MM Dist while falling short of achieving state-of-the-art performance in terms of Diversity and MModality.     

\begin{figure}[t]
    \centering
    \includegraphics[width=0.85 \linewidth]{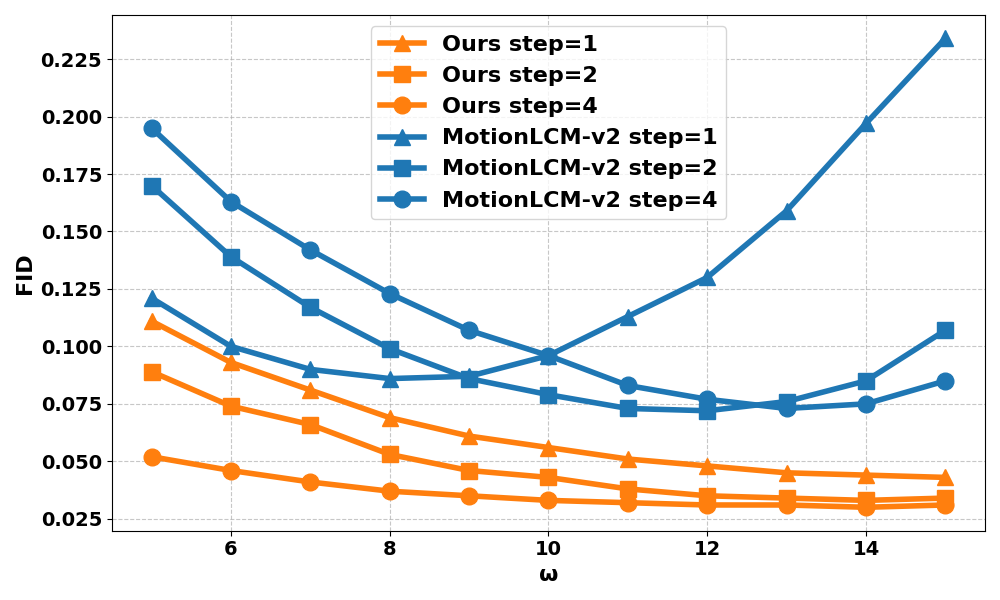}
    \caption{Comparison of the impact of $\omega$ on Our Method and MotionLCM-V2~\cite{motionlcm-v2} on HumanML3D dataset under different sampling steps in test phase.}
    \label{fig:omega}
\end{figure}

Figure~\ref{fig:visual_comparison} demonstrates a qualitative comparison of motion generation between our MotionPCM model, MotionLCM-v2~\cite{motionlcm-v2}, and MLD~\cite{chen2023executing}. Each row features a prompt alongside the motions generated by the three models, clearly demonstrating the advantages of our method over the others. For the first prompt, “The rigs walk forward, then turn around, and continue walking before stopping where he started,” MotionLCM-v2 does not perform the turning action. MLD moves to the left initially and fails to return to the starting position when turning back. In contrast, our method accurately executes the entire sequence as described. In the case of the second prompt, “a person walks with a limp leg,” our model produces a motion that reasonably reflects the limp. MotionLCM-v2 exaggerates the limp, while MLD fails to depict the limp entirely. For the third prompt, “a man walks forward a few steps, raises his left hand to his face, then continues walking in a circle,” our model effectively completes the described actions. MotionLCM-v2 does not generate the circular walking pattern, and MLD produces an indistinct circle without the hand-raising gesture. These findings emphasise the ability of MotionPCM to generate detailed and accurate motion sequences. Overall, these qualitative and quantitative results on both large-scale and small-scale datasets demonstrate that MotionPCM is capable of producing high-quality motions in real time while maintaining superior alignment with the given textual descriptions, outperforming existing benchmarks in motion synthesis.

\subsection{Investigation of CFG scale}

As our model and MotionLCM-V2~\cite{motionlcm-v2} both embed the CFG scale $\omega$ in the motion synthesis model, it is important to see how the CFG scale affects the model performance. It is clear from Figure~\ref{fig:omega} that our model performs much better than MotionLCM-V2 for each $\omega \in [5,15]$ across different sampling steps. Besides, our model is significantly more robust with different CFG scales. E.g. the FID fluctuation of our model with 1 step is within $[0.043, 0.111]$ while MotionLCM-V2 is $[0.086, 0.234]$. Furthermore, keeping similar with the findings from~\cite{wang2024phased}, our model has a more predictable pattern since as the CFG scale increases, the FID metrics of our method normally improve. In contrast, MotionLCM-V2 has no such patterns, and the best performance of their model for $\omega$ is unpredictable and $\omega$ may differ a lot to achieve the best performance at different sampling steps. Visual comparisons between two methods under different $\omega$ can be seen in the supplementary material.

\subsection{Ablation studies}

\begin{table}[t!]
\centering
\scriptsize
\setlength{\tabcolsep}{4pt} 
\renewcommand{\arraystretch}{1.2} 
\begin{tabular}{@{}lccccc@{}}
\toprule
\textbf{Methods} &
{\textbf{R-Precision Top 1$\uparrow$}} &
\textbf{FID $\downarrow$}&
\textbf{MM Dist $\downarrow$} \\ 
\midrule
MotionPCM $k=100$& ${0.560}^{\pm.002}$ & ${0.044}^{\pm.003}$ &${2.711}^{\pm.008}$ \\
MotionPCM $k=50$& ${ 0.554}^{\pm.003}$ & ${0.068}^{\pm.004}$ &${2.739}^{\pm.007}$ \\
MotionPCM $k=20$& ${0.547}^{\pm.003}$ & ${0.079}^{\pm.004}$ &${2.776}^{\pm.005}$ \\
MotionPCM $k=1$& ${0.531}^{\pm.002}$ & ${0.092}^{\pm.004}$ &${2.837}^{\pm.008}$ \\
\midrule
MotionPCM $\mu=0.95$ & ${0.560}^{\pm.002}$ & ${0.044}^{\pm.003}$ &${2.711}^{\pm.008}$ \\
MotionPCM $\mu=0.5$ & ${0.550}^{\pm.003}$ & ${0.039}^{\pm.003}$ &${2.774}^{\pm.005}$ \\
MotionPCM $\mu=0$ & ${0.559}^{\pm.003}$ & ${0.049}^{\pm.004}$ &${2.718}^{\pm.007}$ \\
\midrule
MotionPCM $\lambda=1$ & ${0.543}^{\pm.003}$ & ${0.038}^{\pm.002}$ &${ 2.793}^{\pm.008}$ \\
MotionPCM $\lambda=0.1$ & ${0.560}^{\pm.002}$ & ${0.044}^{\pm.003}$ &${2.711}^{\pm.008}$ \\
MotionPCM $\lambda=0$ & $0.547^{\pm.003}$ &$0.101^{\pm.006}$& $2.785^{\pm.008}$\\
\bottomrule
\end{tabular}
\caption{Ablation studies for single-step sampling on HumanML3D dataset.}
\label{tab:ablation}
\end{table}

We conduct ablation studies on our single-step sampling, as shown in Table~\ref{tab:ablation} to demonstrate the effectiveness of our network design. If we gradually reduce skip step $k$ from 100 to 1, all evaluation metrics degrade progressively. This exception may be due to the increased number of time points, which makes network fitting more difficult, as a smaller skip step $k$ results in more time points within each interval. 

In addition, if we replace the target network with the online network directly—equivalent to setting $\mu=0$ in the exponential moving average (EMA) update, a slightly worse performance is observed compared to training using an EMA way with $\mu=0.95$. When setting $\mu=0.5$ for the EMA update, we observe a slight improvement in FID while R-Precision Top 1 and MM Dist performance decline.  

Last not least, removing a discriminator and its adversarial loss—equivalent to setting $\lambda=0$, leads to much worse performance, indicating the importance of the discriminator in enhancing model performance, particularly in low-step sampling scenarios. Compared to $\lambda=0.1$ used in our paper, setting $\lambda=1$ can improve FID while reducing R-precision Top 1 and MM Dist. Visual comparisons between methods trained with and without a discriminator can be found in the supplementary material.

\section{Conclusion}
In this paper, we present \textbf{MotionPCM}, a novel motion synthesis method that enables real-time motion generation while maintaining high quality and outperforming other state-of-the-art methods. By incorporating phased consistency into our pipeline, we achieve deterministic sampling without the accumulated random noise in multi-step sampling. Furthermore, the introduction of a well-designed discriminator dramatically improves sampling quality. Compared to our main competitor (MotionLCM-v2), MotionPCM demonstrates significantly better performance and robustness to different CFG scales, making it a more effective solution for real-time motion synthesis. 

\textbf{Acknowledgements.} 
LJ and HN are supported by the EPSRC [grant number
EP/S026347/1]. HN is also supported by The Alan Turing Institute under the EPSRC grant EP/N510129/1.

The authors thank Po-Yu Chen, Niels Cariou Kotlarek, François Buet-Golfouse, and especially Mingxuan Yi for their insightful discussions on diffusion models.

{
    \small
    \bibliographystyle{ieeenat_fullname}
    \bibliography{main}
}

\clearpage
\appendix
\section{Supplementary}
\subsection{Equivalent Parameterisation}
In this subsection, we show that Eq.~\eqref{F_theta} has the same format of DDIM, which is established in \cite{wang2024phased,lu2022dpm}. 
\begin{lemma}\label{Lemma1}
Let $F_\theta(x,t,s)$ be defined in  Eq.~\eqref{F_theta}, i.e.,
$$F_\theta(x,t,s) = \frac{\alpha_s}{\alpha_t}x_t-\alpha_s\,\hat{\epsilon}_\theta(x_t,t)
 \int_{\lambda_t}^{\lambda_s}e^{-\lambda}\,d\lambda.$$ 
  Then $F_\theta$ allows the equivalent representation: 
  \begin{eqnarray}\label{Eqn_PCM_DDIM}
  x_s = \frac{\alpha_s}{\alpha_t}\Bigl(x_t-\sigma_t\,\hat{\epsilon}_\theta(x_t,t)\Bigr)
 +\sigma_s\,\hat{\epsilon}_\theta(x_t,t).
 \end{eqnarray}
\end{lemma}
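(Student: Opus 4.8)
The plan is to start from the definition of $F_\theta(x,t,s)$ in Eq.~\eqref{F_theta}, evaluate the elementary integral $\int_{\lambda_t}^{\lambda_s} e^{-\lambda}\,d\lambda$ in closed form, and then rewrite the resulting expression using the relation $\lambda_u = \ln(\alpha_u/\sigma_u)$, i.e.\ $e^{-\lambda_u} = \sigma_u/\alpha_u$. Concretely, $\int_{\lambda_t}^{\lambda_s} e^{-\lambda}\,d\lambda = e^{-\lambda_t} - e^{-\lambda_s} = \dfrac{\sigma_t}{\alpha_t} - \dfrac{\sigma_s}{\alpha_s}$. Substituting this back gives
\begin{equation}
F_\theta(x,t,s) = \frac{\alpha_s}{\alpha_t}x_t - \alpha_s\,\hat{\epsilon}_\theta(x_t,t)\left(\frac{\sigma_t}{\alpha_t} - \frac{\sigma_s}{\alpha_s}\right).
\end{equation}

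The second step is purely algebraic rearrangement. Distributing the $\alpha_s\,\hat{\epsilon}_\theta(x_t,t)$ factor yields $F_\theta(x,t,s) = \dfrac{\alpha_s}{\alpha_t}x_t - \dfrac{\alpha_s}{\alpha_t}\sigma_t\,\hat{\epsilon}_\theta(x_t,t) + \sigma_s\,\hat{\epsilon}_\theta(x_t,t)$, and factoring $\alpha_s/\alpha_t$ out of the first two terms produces exactly
\begin{equation}
x_s = \frac{\alpha_s}{\alpha_t}\bigl(x_t - \sigma_t\,\hat{\epsilon}_\theta(x_t,t)\bigr) + \sigma_s\,\hat{\epsilon}_\theta(x_t,t),
\end{equation}
which is Eq.~\eqref{Eqn_PCM_DDIM}. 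One may optionally remark that $\bigl(x_t - \sigma_t\hat{\epsilon}_\theta(x_t,t)\bigr)/\alpha_t$ is precisely the predicted clean sample $\hat{x}_0$ in the $\epsilon$-prediction parameterisation, so the identity exhibits $F_\theta$ in the familiar DDIM form $x_s = \alpha_s \hat{x}_0 + \sigma_s\hat{\epsilon}_\theta$.

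There is essentially no obstacle here: the only ingredients are the evaluation of an elementary exponential integral and the substitution $e^{-\lambda_u} = \sigma_u/\alpha_u$ coming from the definition $\lambda_u = \ln(\alpha_u/\sigma_u)$ given just below Eq.~(7) in the excerpt. The one point worth stating carefully is that this substitution is exactly what converts the integral bounds (which are in $\lambda$-space) into the noise-schedule quantities $\sigma_t/\alpha_t$ and $\sigma_s/\alpha_s$; everything after that is bookkeeping. If desired, the write-up can also note the boundary check $F_\theta(x_{s_m}, s_m, s_m) = x_{s_m}$ as a sanity consequence (the integral vanishes when $t = s$), consistent with the discussion following Eq.~\eqref{F_theta}.
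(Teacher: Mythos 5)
Your proposal is correct and follows essentially the same route as the paper's proof: evaluate $\int_{\lambda_t}^{\lambda_s}e^{-\lambda}\,d\lambda = e^{-\lambda_t}-e^{-\lambda_s}$, substitute $e^{-\lambda_u}=\sigma_u/\alpha_u$ from $\lambda_u=\ln(\alpha_u/\sigma_u)$, and rearrange algebraically into the DDIM form. No gaps; the optional remarks about $\hat{x}_0$ and the boundary check are consistent with, though not required by, the paper's argument.
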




\begin{proof}The proof follows \cite{wang2024phased,lu2022dpm}.  Eq.~\eqref{F_theta} is written as 
\begin{equation}
F_\theta(x,t,s)=x_s=\frac{\alpha_s}{\alpha_t}x_t-\alpha_s\,\hat{\epsilon}_\theta(x_t,t)
\int_{\lambda_t}^{\lambda_s}e^{-\lambda}\,d\lambda.
\end{equation}
Since 
\begin{equation}
\int_{\lambda_t}^{\lambda_s}e^{-\lambda}\,d\lambda=-e^{-\lambda}\Big|_{\lambda_t}^{\lambda_s}
=e^{-\lambda_t}-e^{-\lambda_s},
\end{equation}
and noting that $\lambda_t = \ln{\frac{\alpha_t}{\sigma_t}}$ and $\lambda_s = \ln{\frac{\alpha_s}{\sigma_s}}$, we have 
\begin{align}
\begin{split}
F_\theta(x,t,s)&=\frac{\alpha_s}{\alpha_t}x_t-\alpha_s\,\hat{\epsilon}_\theta(x_t,t)
(e^{-\lambda_{t}}-e^{-\lambda_{s}})\\
&=\frac{\alpha_s}{\alpha_t}x_t-e^{-\lambda_{t}}\alpha_s\,\hat{\epsilon}_\theta(x_t,t)+e^{-\lambda_{s}}\alpha_s\,\hat{\epsilon}_\theta(x_t,t)\\
&=\frac{\alpha_s}{\alpha_t}x_t-\frac{\sigma_t}{\alpha_t}\alpha_s\,\hat{\epsilon}_\theta(x_t,t)+\sigma_s\,\hat{\epsilon}_\theta(x_t,t)\\
&= \frac{\alpha_s}{\alpha_t}\Bigl(x_t-\sigma_t\,\hat{\epsilon}_\theta(x_t,t)\Bigr)
+\sigma_s\,\hat{\epsilon}_\theta(x_t,t).
\end{split}
\end{align}

\end{proof}

Recall that in DDIM, a typical update from time-step $t$ to $s$ is given by 
\begin{equation*}
    x_s=\frac{\sqrt{\bar{\alpha}_s}}{\sqrt{\bar{\alpha}_t}}\Bigl(x_t-\sqrt{1-\bar{\alpha}_t}\,\hat{\epsilon}_\theta(x_t,t)\Bigr)
+\sqrt{1-\bar{\alpha}_s}\,\hat{\epsilon}_\theta(x_t,t),
\end{equation*}
where $\bar{\alpha}_t$ is the cumulative noise-schedule term.  With $\alpha_t=\sqrt{\bar{\alpha}_t}$ and $\sigma_t=\sqrt{1-\bar{\alpha}_t}$, this update becomes 
\begin{equation}\label{Eqn_DDIM}
x_s=\frac{\alpha_s}{\alpha_t}\Bigl(x_t-\sigma_t\,\hat{\epsilon}_\theta(x_t,t)\Bigr)
+\sigma_s\,\hat{\epsilon}_\theta(x_t,t).
\end{equation}
By comparing Eq. \eqref{Eqn_DDIM} and Eq. \eqref{Eqn_PCM_DDIM}, we observe that Eq.~\eqref{F_theta} has the same format as DDIM. However, as noted in~\cite{wang2024phased}, the difference between Eq.~\eqref{F_theta} and DDIM lies in the meaning of $\hat{\epsilon}_\theta$. In DDIM, $\hat{\epsilon}_\theta$ refers to the first-order approximation of ODE, whereas in our model, the network $\hat{\epsilon}_\theta$ estimates $\frac{\int_{\lambda_t}^{\lambda_s} e^{-\lambda} \epsilon_\theta(x_{\tau(\lambda)}, \tau(\lambda)) \, d\lambda}{\int_{\lambda_t}^{\lambda_s} e^{-\lambda}d\lambda}$. 

\subsection{VAE performance}
In this section, we evaluate the performance of the VAE network, enhanced by~\cite{motionlcm-v2}, on the HumanML3D dataset, as presented in Table~\ref{tab:VAE_performance}.
\begin{table}[h]
    \centering
    \begin{tabular}{cccc}
        \toprule
       Latents &  FID $\downarrow$& MPJPE $\downarrow$ & Feature err. $\downarrow$ \\
       \midrule
       16$\times$ 32  &0.008 & 15.8 & 0.214   \\
         \bottomrule
         
    \end{tabular}
    \caption{VAE performance on MotionML3D dataset. MPJPE is measured in millimetre.}
    \label{tab:VAE_performance}
\end{table}

\subsection{Comparison with MLD* accelerating via DDIM}

\begin{figure}[ht]
\centering
\includegraphics[width=0.48\textwidth]{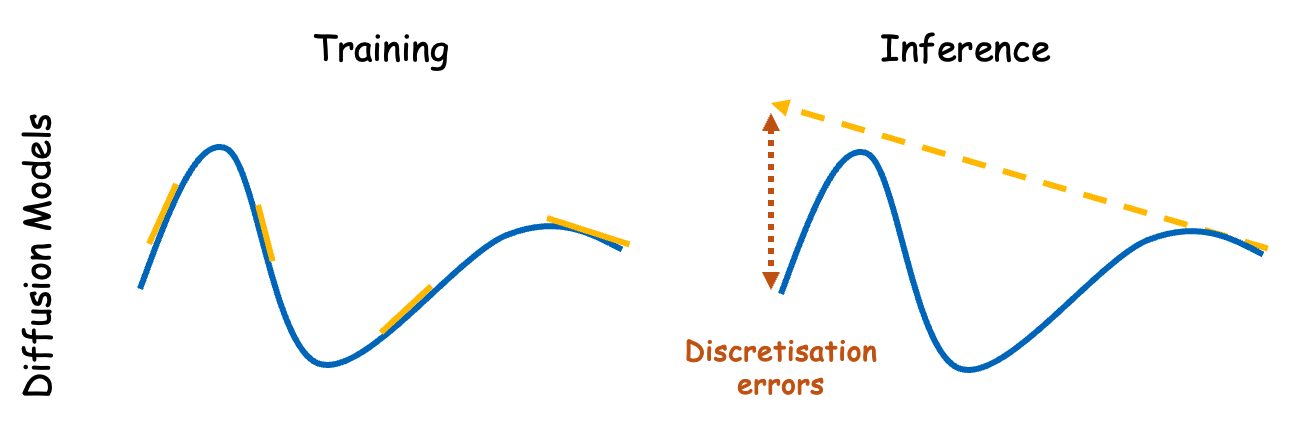}
\caption{Illustration of discretisation errors in the inference stage of diffusion models.}
\label{fig:discret}
\end{figure}

MLD* is the model obtained by retraining MLD~\protect\cite{chen2023executing} with the improved VAE proposed by MotionLCM-V2~\protect\cite{motionlcm-v2}, which is our baseline. In this section, we show results of accelerating MLD* using DDIM~\cite{song2020denoising} with different sampling steps. As we can see from Table~\ref{tab:DDIM_step}, directly applying DDIM to reduce the sampling steps results in significant performance degradation, with the FID of MLD* increasing from 0.065 to 24.34 as the sampling steps reduce from 50 to 1.  This degradation is attributed to the accumulation of discretisation errors (as shown in Figure~\ref{fig:discret}). In contrast, our model can accelerate MLD* sampling by reducing the number of steps from 50 to 1, while actually improving its performance. We attribute this improvement to our phased consistency model and discriminator design.

\begin{table*}[h]
\centering
\scriptsize
\setlength{\tabcolsep}{4pt} 
\renewcommand{\arraystretch}{1.2} 
\begin{tabular}{@{}lcccccc@{}}
\toprule
\multirow{2}*{\textbf{Methods}} &
\multicolumn{3}{c}{\textbf{R-Precision $\uparrow$}} &
\multirow{2}*{\textbf{FID $\downarrow$}} &
\multirow{2}*{\textbf{MM Dist $\downarrow$}} &
\multirow{2}*{\textbf{Diversity $\rightarrow$}} \\ 
\cmidrule(lr){2-4}
&  \textbf{Top 1} & \textbf{Top 2} & \textbf{Top 3} & & &  \\ 
\midrule
MLD* (50 steps)& ${0.545}^{\pm.002}$ &${0.739}^{\pm.003}$ & ${0.830}^{\pm.002}$& ${0.065}^{\pm.003}$ &${2.816}^{\pm.007}$ & $9.620^{\pm.098}$\\
\midrule
MLD* (10 steps) &  ${0.546}^{\pm.003}$ &${0.737}^{\pm.002}$ & ${0.830}^{\pm.002}$& ${0.070}^{\pm.003}$ &${2.816}^{\pm.008}$ & $9.689^{\pm.078}$\\
\midrule
MLD* (4 steps) & $0.515^{\pm.003}$ &$0.708^{\pm.003}$ & $0.807^{\pm.002}$&$0.133^{\pm.004}$& $2.976^{\pm.008}$& $9.574^{\pm.091}$ \\
\midrule
MLD* (2 steps) & $0.273^{\pm.002}$ &$0.428^{\pm.003}$ & $0.537^{\pm.003}$&$3.490^{\pm.034}$& $4.792^{\pm.011}$& $7.530^{\pm.076}$\\
\midrule
MLD* (1 step) & $0.035^{\pm.001}$ &$0.068^{\pm.002}$ & $0.101^{\pm.002}$&$24.34^{\pm.075}$& $7.913^{\pm.010}$& $4.299^{\pm.050}$\\
\midrule
Ours (1 step) &  $\textbf{0.560}^{\pm.002}$ &$\textbf{0.752}^{\pm.003}$ & $\textbf{0.844}^{\pm.002}$& $\textbf{0.044}^{\pm.003}$ &$\textbf{2.711}^{\pm.008}$ & $\textbf{9.559}^{\pm.081}$ \\ 
\bottomrule
\end{tabular}
\caption{The results of accelerating MLD* using DDIM across different sampling steps on HumanML3D dataset.}
\label{tab:DDIM_step}
\end{table*}

\subsection{Insights into the Importance and Design of the Discriminator}
\label{discriminator}

\begin{figure*}[h]
    \centering
    \includegraphics[width=0.9\linewidth]{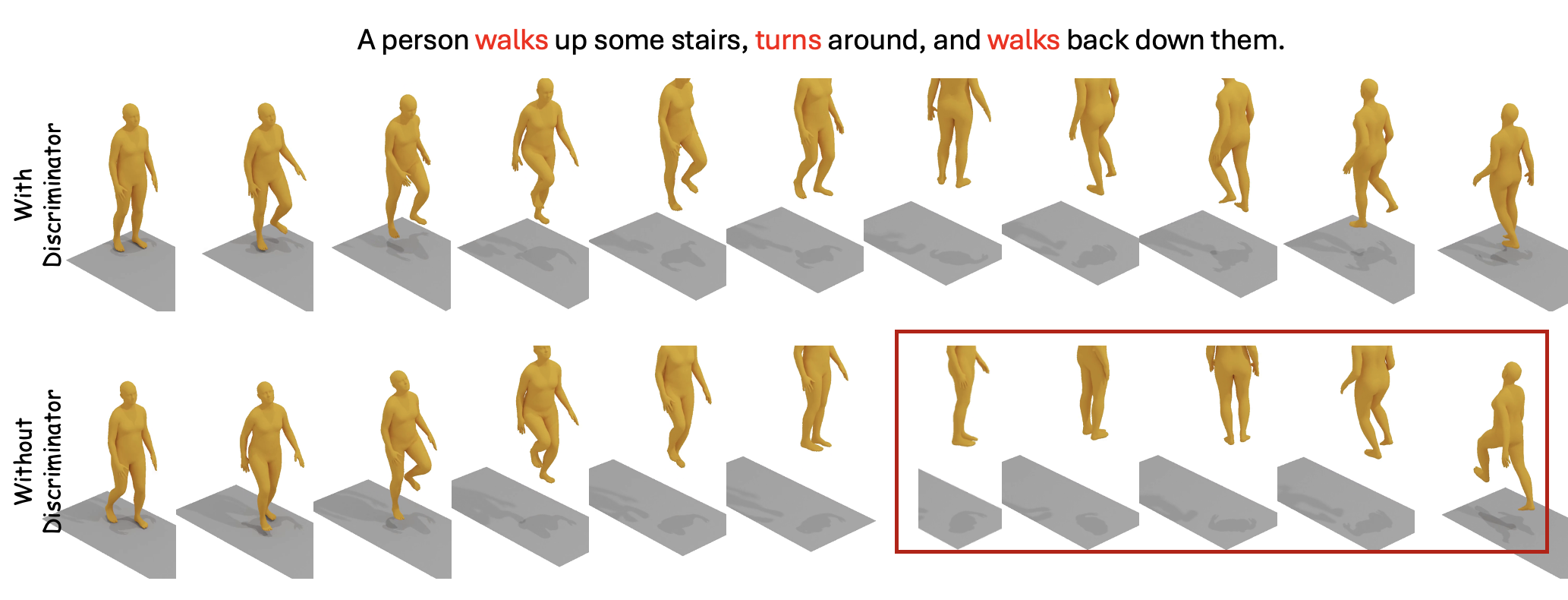}\\[1ex]
    \includegraphics[width=0.9\linewidth]{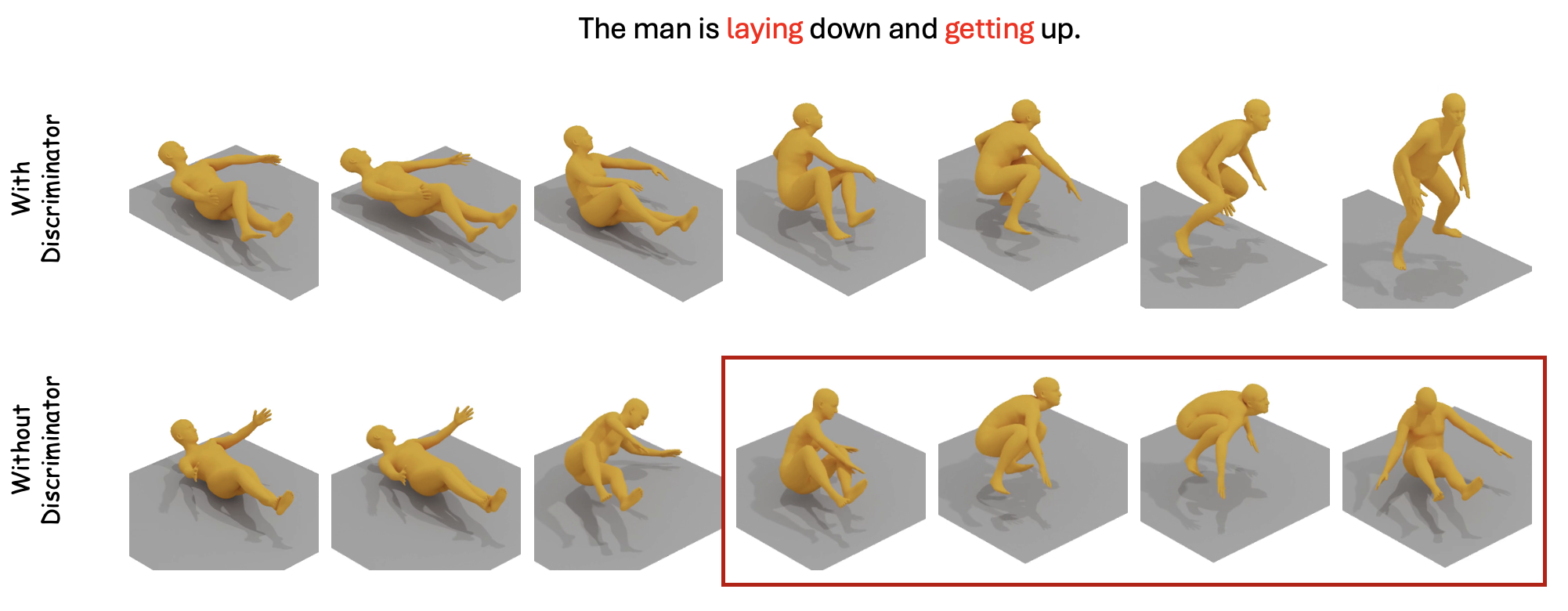}
    \caption{Ablation experiments on the role of the discriminator in our approach. The images are arranged from left to right, representing the motion over time. The low-quality motion generation is highlighted in the \textcolor{red}{red} box.}
    \label{fig:disc_compare}
\end{figure*}

To demonstrate the importance of the discriminator, we exhibit two visual comparisons with and without our discriminator in Figure~\ref{fig:disc_compare}. For the first given text condition 'A person walks up some stairs, turn around and walks down them', the model trained without the discriminator demonstrate some unexpected actions, such as no leg movements when the person turns around, violating the kinematic principles, and the person walking up the stairs again at the end of the video, which contradicts the textual description. Similarly, for the second example, the model trained without the discriminator shows the person lying down again after attempting to get up. In contrast, the model trained with the discriminator can generate smoother actions and better align with the textual condition.

In terms of the input for the discriminator, it is clear from Table~\ref{tab:VAE_performance} and Table~\ref{tab:DDIM_step} that the FID score of VAE is much lower (better) than the MLD* used to distill our model. As mentioned by CTM~\cite{kim2023consistency} that the direct training signal derived from the data label plays a crucial role in enabling a student model to even outperform its teacher model during the distillation phase. This is why we use the latent code of VAE to guide the discriminator training, instead of using the training signal from the teacher network.  On the other hand, if we follow the PCM approach~\cite{wang2024phased}, which feeds $\hat{z}_{s_m}$ from the teacher network to the discriminator, we can observe instability in the training and we cannot obtain meaningful results. Therefore, using $z_0$ can not only overcome the performance bottleneck imposed by MLD* but also stablise training, accelerating convergence.

\subsection{Guidance Scale Effect}
\label{cfg}

We also show the effect of the guidance scale $\omega$ on the FID score for the KIT-ML dataset in Figure~\ref{fig:omega-kit}. Similarly to the findings on the HumanML3D dataset, a larger $\omega$ usually leads to a better or comparable result, while the best performance on the KIT-ML dataset is achieved around $\omega = 12$ for all sampling steps. For fair comparison, we keep $\omega=14$ for all experiments in the test phase. MotionLCM-v2~\cite{motionlcm-v2} does not report their result on KIT-ML dataset, leaving no ways to compare.    

\begin{figure}[h]
    \centering
    \includegraphics[width=0.85 \linewidth]{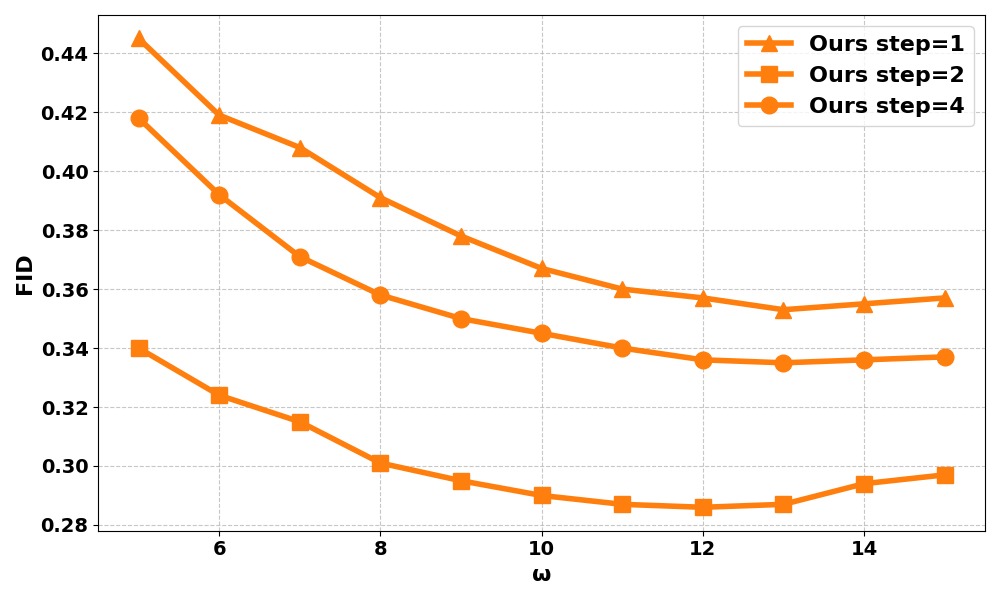}
    \caption{The impact of $\omega$ on Our Method for 1-step sampling.}
    \label{fig:omega-kit}
\end{figure}

\begin{figure*}[t]
    \centering
    \includegraphics[width=0.9\linewidth]{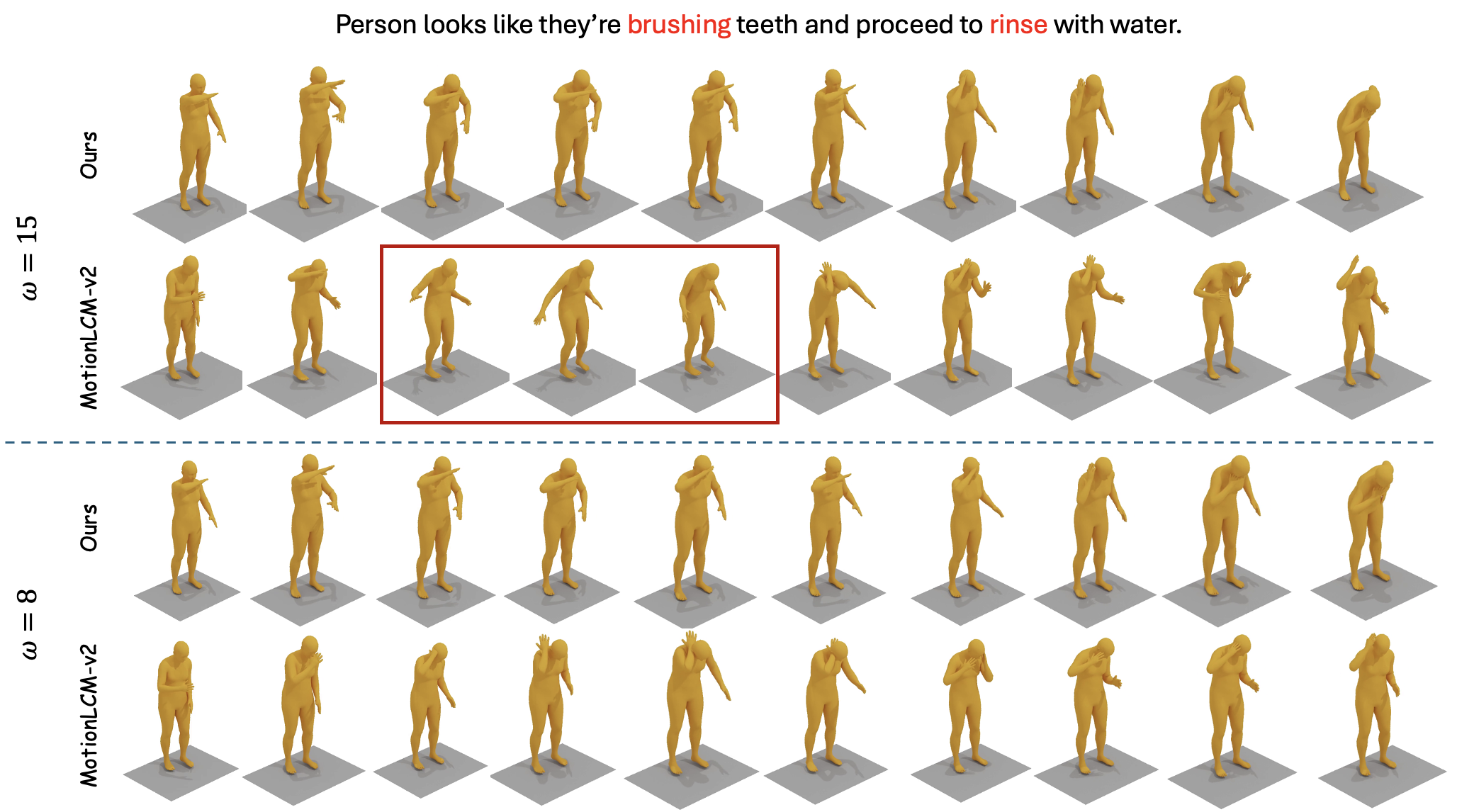}\\[1ex]
    \includegraphics[width=0.9\linewidth]{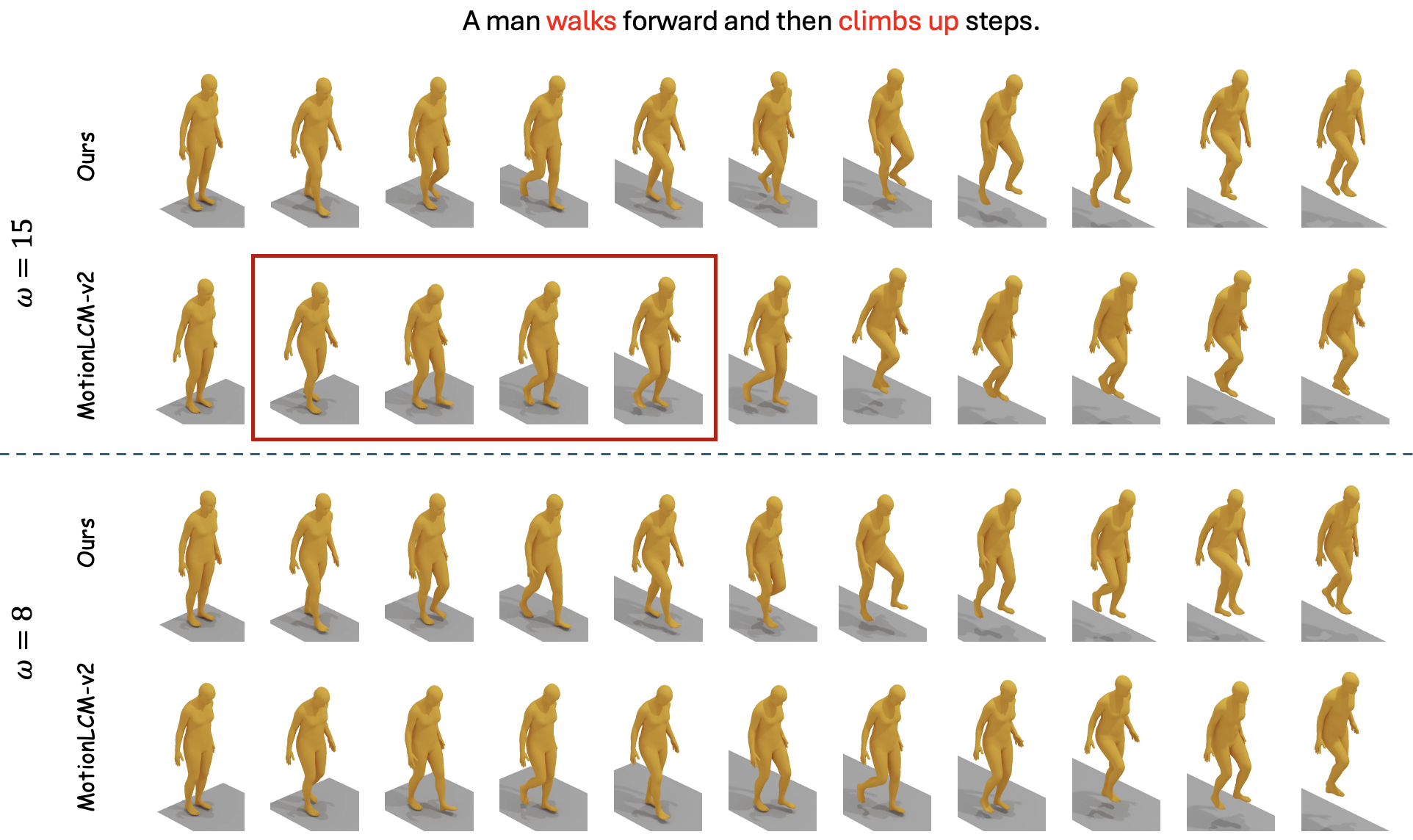}
    \caption{Qualitative demonstration of our method versus MotionLCM-v2 under different $\omega$. The images are arranged from left to right, representing the motion over time. The low-quality motion generation is highlighted in the \textcolor{red}{red} box.}
    \label{fig:supp-omega-sample}
\end{figure*}

Additionally, we show the visual comparison between our method and MotionLCM-v2 with different guidance scales $\omega$ in Figure~\ref{fig:supp-omega-sample}. It's clear to see that the performance of MotionLCM-v2 with $\omega=15$ is worse than $\omega=8$. For example, given the textual description `person looks like they're brushing teeth and proceed to rinse with water', MotionLCM-v2 on $\omega=15$ introduces several unrelated actions from frames 3 to 5 in this figure. Furthermore, given the textual description `A man walks forward and then climbs up steps', MotionLCM-v2 with $\omega=15$ exhibits a staggered gait during the initial walking phase, rendering it less natural. In contrast, our approach maintains consistency across different $\omega$. A clear comparison can be observed in the accompanying videos.

\subsection{Pseudocode}
To clearly and concisely illustrate our training process, we present the pseudocode for the training phase in Algorithm~\ref{algo}. 

\begin{algorithm*}[h]
    \caption{MotionPCM Training}
    \begin{algorithmic}[1]
        \State \textbf{Input:} motion dataset $\mathcal{M}$, PCM parameter $\theta$, discriminator $f_D$ with trainable parameters $\theta_D$, pre-trained frozen encoder $\mathcal{E}$, learning rate $\eta$, ODE solver $\Psi$, Huber loss $d$, EMA update rate $\mu$, noise schedule (drift coefficients $\alpha_t$, diffusion coefficients $\sigma_t$), CFG scale $[\omega_{\min}, \omega_{\max}]$, skip number of ODE solver $k$, discretized timesteps $\{t_i \mid i = 0, 1,2,\cdots,N\}$ where $t_0 = \epsilon$ and $t_N = T$, edge timesteps $\{s_m\mid m=0,1,2,\cdots,M\} \in \{t_i\}_{i=0}^{N}$ where $s_0=t_0$ and $s_M = t_N$.
        \State Training data: $\mathcal{M}_x = \{(\mathbf{x}, c)\}$
        \State $\theta^- \gets \theta$
        \Repeat
        \State Sample $(\mathbf{x}, c) \sim \mathcal{M}_x$, $n \sim \mathcal{U}(0, N - k)$ and $\omega \sim [\omega_{\min}, \omega_{\max}]$
        \State Obtain the latent code $z_0 = \mathcal{E}(x)$
        \State Sample $z_{t_{n+k}} \sim \mathcal{N}(\alpha_{t_{n+k}} z_0, \sigma^2_{t_{n+k}} I)$
        \State Determine $[s_m, s_{m+1}]$ given $n$
        \State $\mathbf{z}^{\phi}_{t_n} \gets (1 + \omega) \Psi(\mathbf{z}_{t_{n+k}}, t_{n+k}, t_n, c) - \omega \Psi(\mathbf{z}_{t_{n+k}}, t_{n+k}, t_n, \emptyset)$
        \State $\tilde{\mathbf{z}}_{s_m} = f^{m}_{\theta} (\mathbf{z}^{\phi}_{t_{n+k}}, t_{n+k}, \omega, c)$ and $\hat{\mathbf{z}}_{s_m} = f_{\theta^-} (\mathbf{z}^{\phi}_{t_n}, t_n, \omega, c)$
        \State Obtain $\tilde{\mathbf{z}}_s$ and $\mathbf{z}_s$ through adding noise to $\tilde{\mathbf{z}}_{s_m}$ and $\mathbf{z}_0$
        \State $\mathcal{L}(\theta, \theta^-) = d(\tilde{\mathbf{z}}_{s_m}, \hat{\mathbf{z}}_{s_m}) + \lambda ( \text{ReLU}(1-f_D(\tilde{\mathbf{z}}_s,s,c))$
        \State $\theta \gets \theta - \eta \nabla_{\theta} \mathcal{L}(\theta, \theta^-)$
        \State $\theta^- \gets \text{stopgrad}(\mu \theta^- + (1 - \mu) \theta)$   
        \State $\mathcal{L}_{\theta_D} = \text{ReLU}(1 - f_D({\mathbf{z}}_s,s,c)) + \text{ReLU}(1 + f_D(\tilde{\mathbf{z}}_s,s,c))$
        \State $\theta_d \gets \theta_d - \eta \nabla_{\theta_d} \mathcal{L}_{\theta_d}$
        \Until convergence
    \end{algorithmic}
    \label{algo}
\end{algorithm*}

\subsection{Detailed Definitions of Evaluation Metrics}

In this section, we provide a more detailed explanation of the various metrics used in this paper to evaluate the performance of our models, and we discuss their respective roles. We divide the chosen evaluation metrics into five categories, as described below:

\textbf{Time Cost}: To gauge each model’s inference efficiency, we follow~\cite{chen2023executing,dai2025motionlcm} and report the \textbf{Average Inference Time per Sentence (AITS)}, measured in seconds. Specifically, we calculate AITS with a batch size of 1, excluding any time spent loading the model and dataset, in accordance with~\cite{dai2025motionlcm}.

\textbf{Reconstruction Quality:} Following~\cite{von2018recovering, chen2023executing}, we adopt the \textbf{Mean Per Joint Position Error (MPJPE)} to evaluate reconstruction performance. MPJPE represents the average Euclidean distance between the ground-truth and estimated joint positions, which can be calculated as:
\[
\text{MPJPE} = \frac{1}{N \times J} \sum_{i=1}^{N} \sum_{j=1}^{J} \| \hat{p}_{i,j} - p_{i,j} \|_2,
\]
Where $N$ denotes the number of samples, $J$ represents the number of joints, $\hat{p}_{i,j}$ is the predicted position of the $j$-th joint in the $i$-th sample, $p_{i,j}$ is the corresponding ground-truth position. A lower MPJPE value indicates a more accurate reconstruction, reflecting better alignment between the predicted and ground-truth joint positions. Additionally, following \cite{motionlcm-v2}, \textbf{Feature Error} is used to calculate the mean squared error between the encoder results generated from input motion sequence and reconstructed motion sequence, respectively. In formula: 
\[
\text{Feature error} = \frac{1}{N} \sum_{i=1}^{N} \| \mathcal{E}(\tilde{p}_i) - \mathcal{E}(\hat{p}_i) \|_2,
\]
where $\tilde{p}$ is input motion sequence, $\hat{p}$ is reconstructed motion sequence and $\mathcal{E}$ is encoder of the trained VAE.

\textbf{Condition Matching}: Motion/text feature extractors provided by~\cite{guo2022generating} encode motion and text into a shared feature space, where matched pairs lie close together. To compute \textbf{R-Precision}, each generated motion is mixed with 31 mismatched motions, and we measure the Top-1/2/3 text-to-motion matching accuracy in this shared space. Likewise, \textbf{MM Dist (Multimodal Distance)} calculates the average distance between each generated motion and its corresponding text prompt in the feature space, thereby reflecting how accurately the model captures the intended semantics.

\textbf{Motion Quality}: We use the \textbf{Frechet Inception Distance (FID)} to quantify how closely the distribution of generated motions aligns with real motions. Feature extraction is performed using the method described in~\cite{guo2020action2motion,guo2022generating}, and FID is then computed on these extracted features.

\textbf{Motion Diversity}: Following~\cite{guo2022tm2t,guo2020action2motion}, we use two metrics to assess the variety of generated motions: \textbf{Diversity} measures the global variation among all generated motions. Specifically, two subsets of the same size $S_d$ are randomly sampled from the generated set, with feature vectors $\{v_1, \ldots, v_{S_d}\}$ and $\{v'_1, \ldots, v'_{S_d}\}$. The average Euclidean distance between corresponding pairs $(v_i, v'_i)$ is then reported:
\[
\text{Diversity} = \frac{1}{S_d} \sum_{i=1}^{S_d} \|v_i - v'_i\|_2.
\]

\textbf{MultiModality (MModality)} assesses how much the generated motions diversify within each textual prompt. For a randomly sampled set of $C$ text prompts, two equally sized subsets of $I$ motions are drawn from the motions generated for the $c$-th prompt, resulting in feature vectors $\{v_{c,1}, \ldots, v_{c,I}\}$ and $\{v'_{c,1}, \ldots, v'_{c,I}\}$. The mean pairwise distance is:
\[
\text{MModality} = \frac{1}{C \times I}
\sum_{c=1}^{C} \sum_{i=1}^{I} \|v_{c,i} - v'_{c,i}\|_2.
\]
While Diversity reflects the overall variation among all generated motions, MModality focuses on the variation within each action type.

\subsection{More Qualitative Results}
 We present more qualitative results applying our method to text-conditioned motion synthesis in Figure~\ref{fig:more_qualitative}. Each example is accompanied by a video. 
\begin{figure*}
\centering
\renewcommand{\arraystretch}{1.2} 
\begin{tabular}{>{\centering\arraybackslash}m{0.22\textwidth}
                >{\centering\arraybackslash}m{0.22\textwidth}
                >{\centering\arraybackslash}m{0.22\textwidth}
                >{\centering\arraybackslash}m{0.22\textwidth}}
\hline

\small{\textit{A person \textcolor{red}{jumps} rope.}}& \small{\textit{A man \textcolor{red}{climbs up} steps.}} & \small{\textit{Someone \textcolor{red}{acting like} a bird.}} & \small{\textit{A person \textcolor{red}{bowing}.}}\\

\includegraphics[height=3.8cm,keepaspectratio]{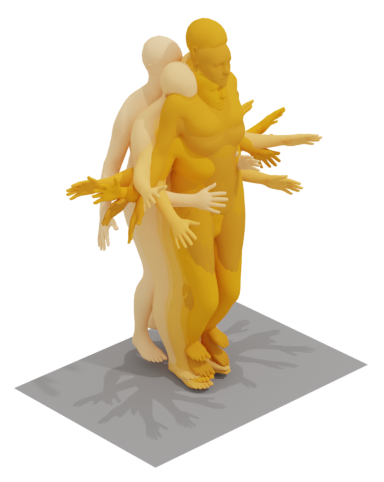} &
  \includegraphics[height=3.8cm,keepaspectratio]{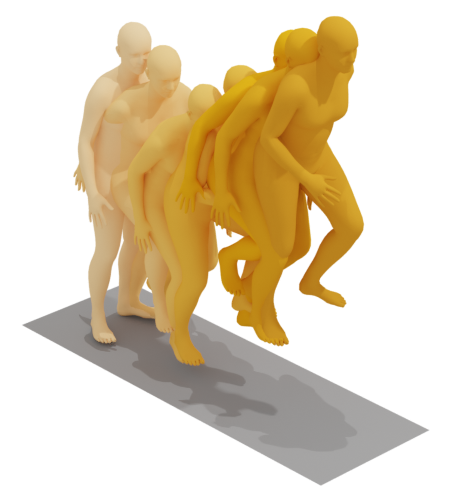} &
  \includegraphics[height=3.8cm,keepaspectratio]{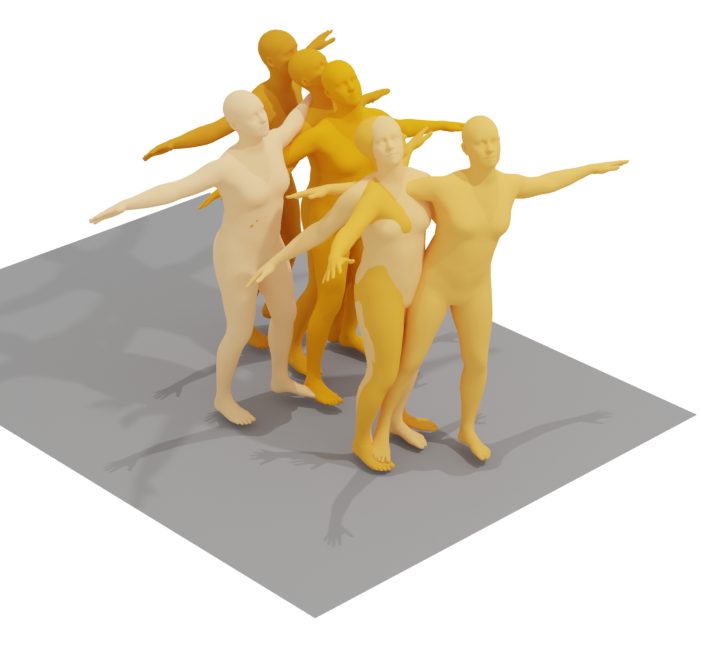} &
  \includegraphics[height=3.8cm,keepaspectratio]{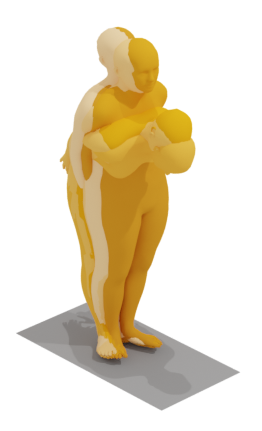} \\ 
\hline

\small{\textit{A person \textcolor{red}{walks} and \textcolor{red}{waves} their arms like a monkey.}}& \small{\textit{A person is \textcolor{red}{waving} his left hand.}} & \small{\textit{A person \textcolor{red}{walks} in a circle to his left.}} & \small{\textit{A person \textcolor{red}{sits down}.}}\\

\includegraphics[height=3.8cm,keepaspectratio]{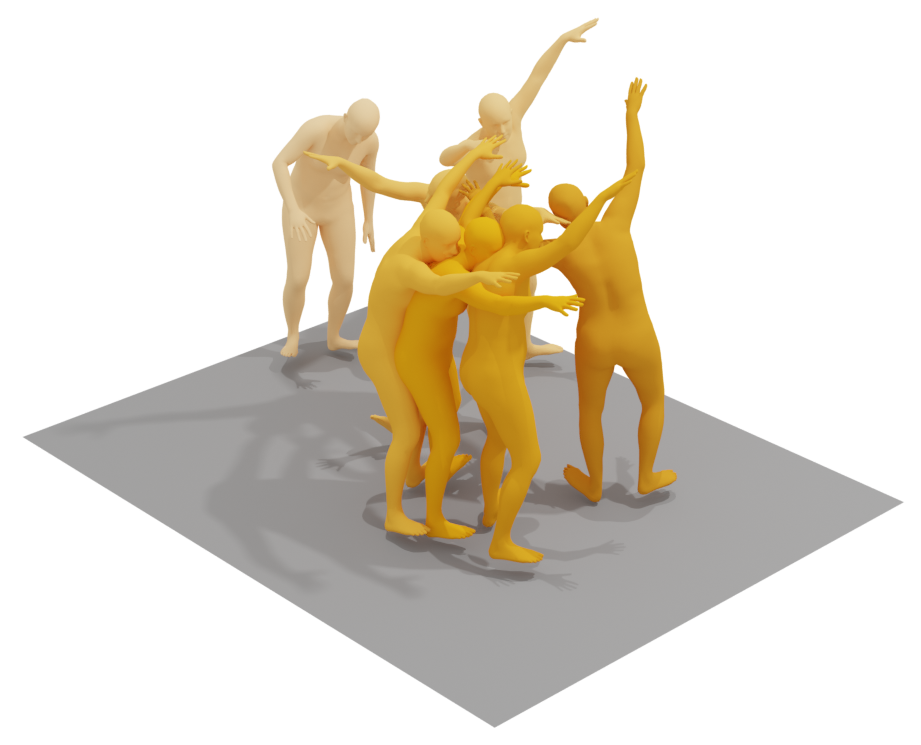} &
  \includegraphics[height=3.8cm,keepaspectratio]{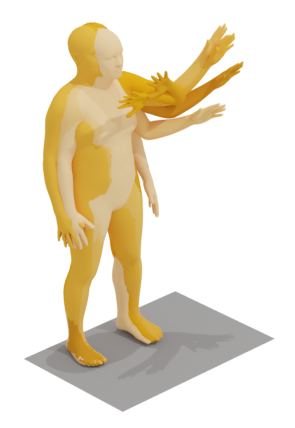} &
  \includegraphics[height=3.8cm,keepaspectratio]{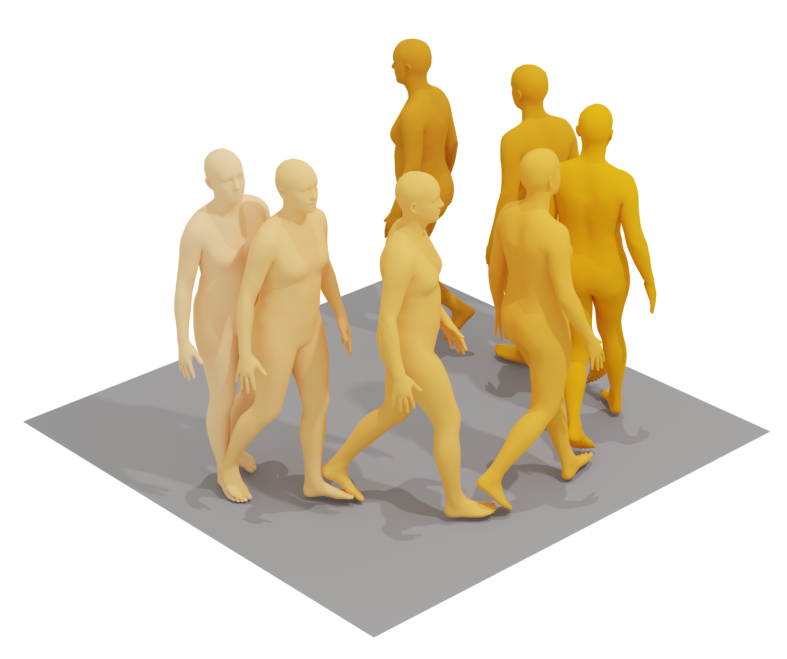} &
  \includegraphics[height=3.8cm,keepaspectratio]{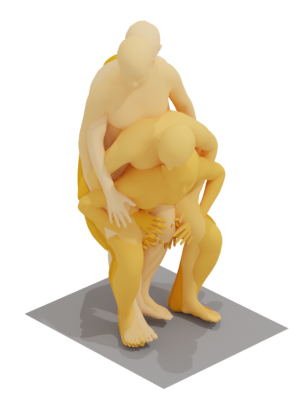} \\ 
\hline

\end{tabular}
\bigskip
\begin{tabular}{>{\centering\arraybackslash}m{0.32\textwidth}
                >{\centering\arraybackslash}m{0.32\textwidth}
                >{\centering\arraybackslash}m{0.32\textwidth}}

\small{\textit{A person is \textcolor{red}{running}.}}& \small{\textit{With arms out to the sides, a person \textcolor{red}{walks} forward.}} & \small{\textit{A person \textcolor{red}{walks} forward slowly, using one hand at a time to \textcolor{red}{pull} themselves forward or keep balance.}}\\

\includegraphics[height=3.8cm,keepaspectratio]{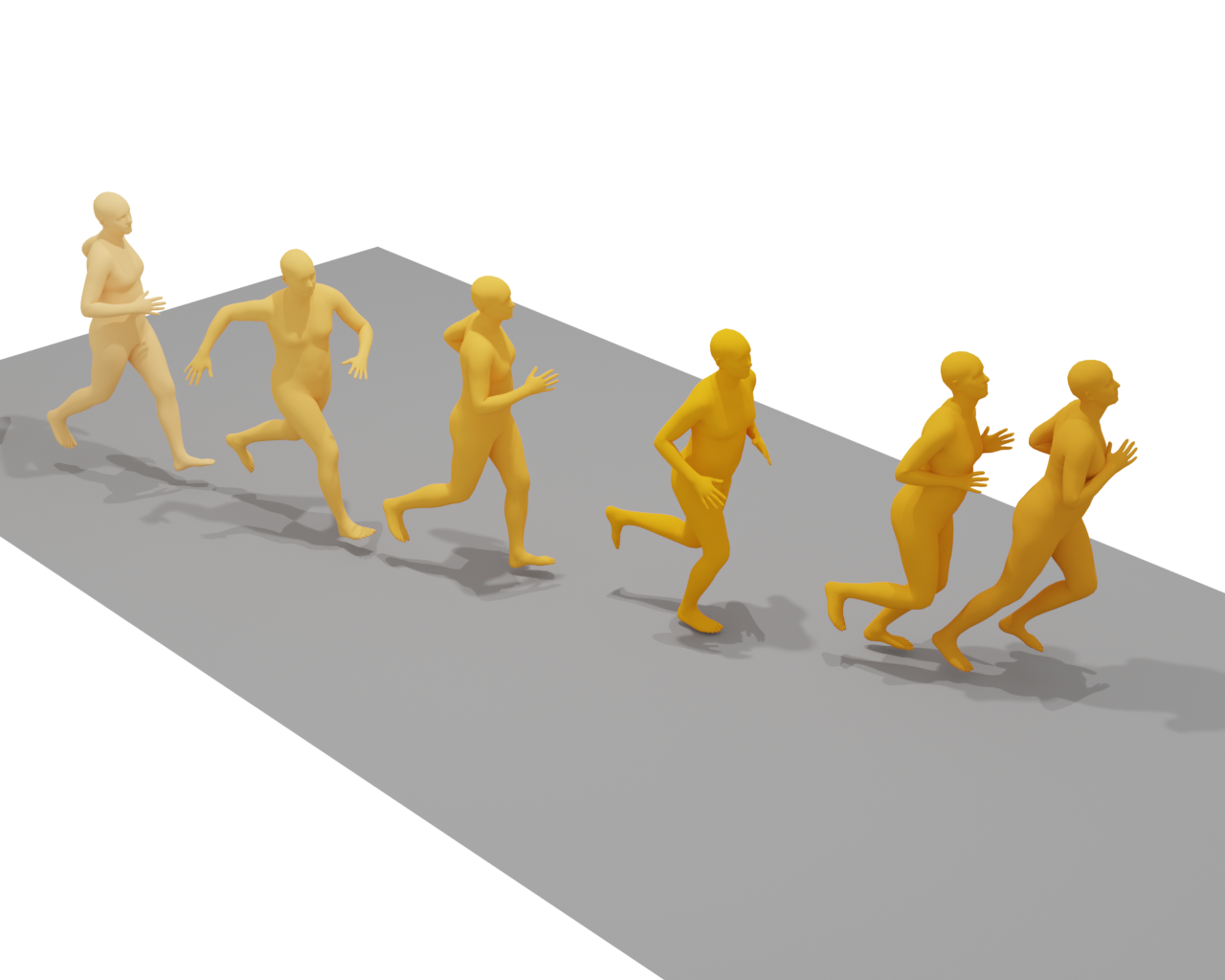} &
  \includegraphics[height=3.8cm,keepaspectratio]{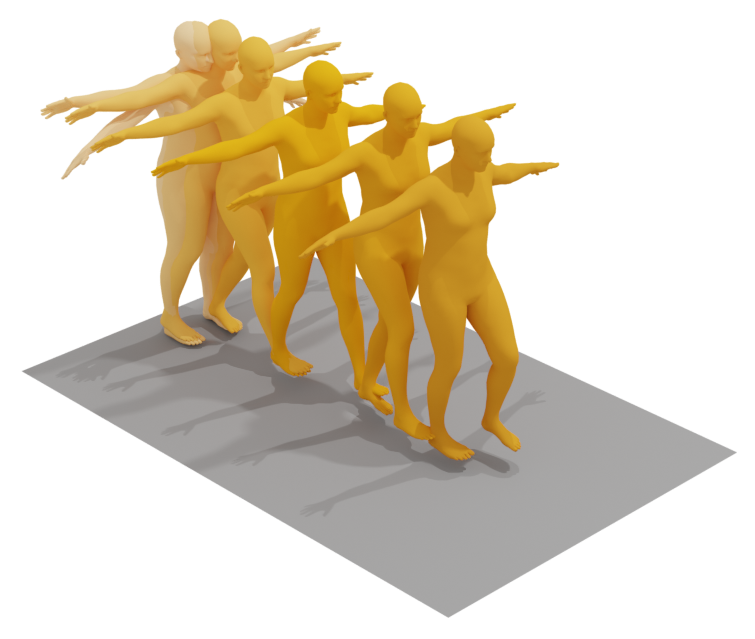} &
  \includegraphics[height=3.8cm,keepaspectratio]{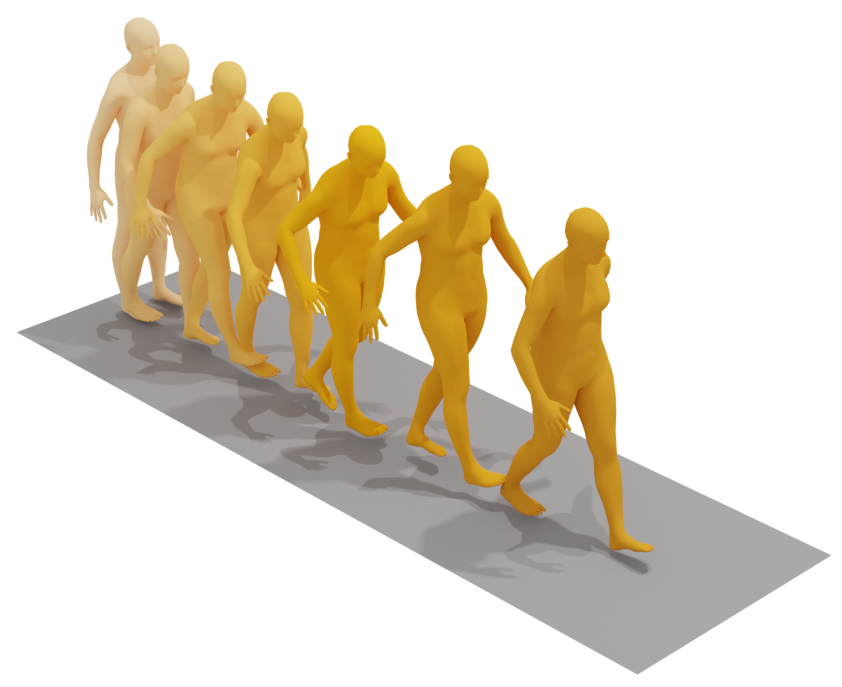} \\ 
\hline

\small{\textit{A man is \textcolor{red}{walking} in a large counter-clockwise circle.}}& \small{\textit{A person \textcolor{red}{takes a step} backward, carefully \textcolor{red}{gets down} on his hands and knees, then \textcolor{red}{crawls} forward.}} & \small{\textit{A person is \textcolor{red}{walking} backwards.}}\\

\includegraphics[height=3.8cm,keepaspectratio]{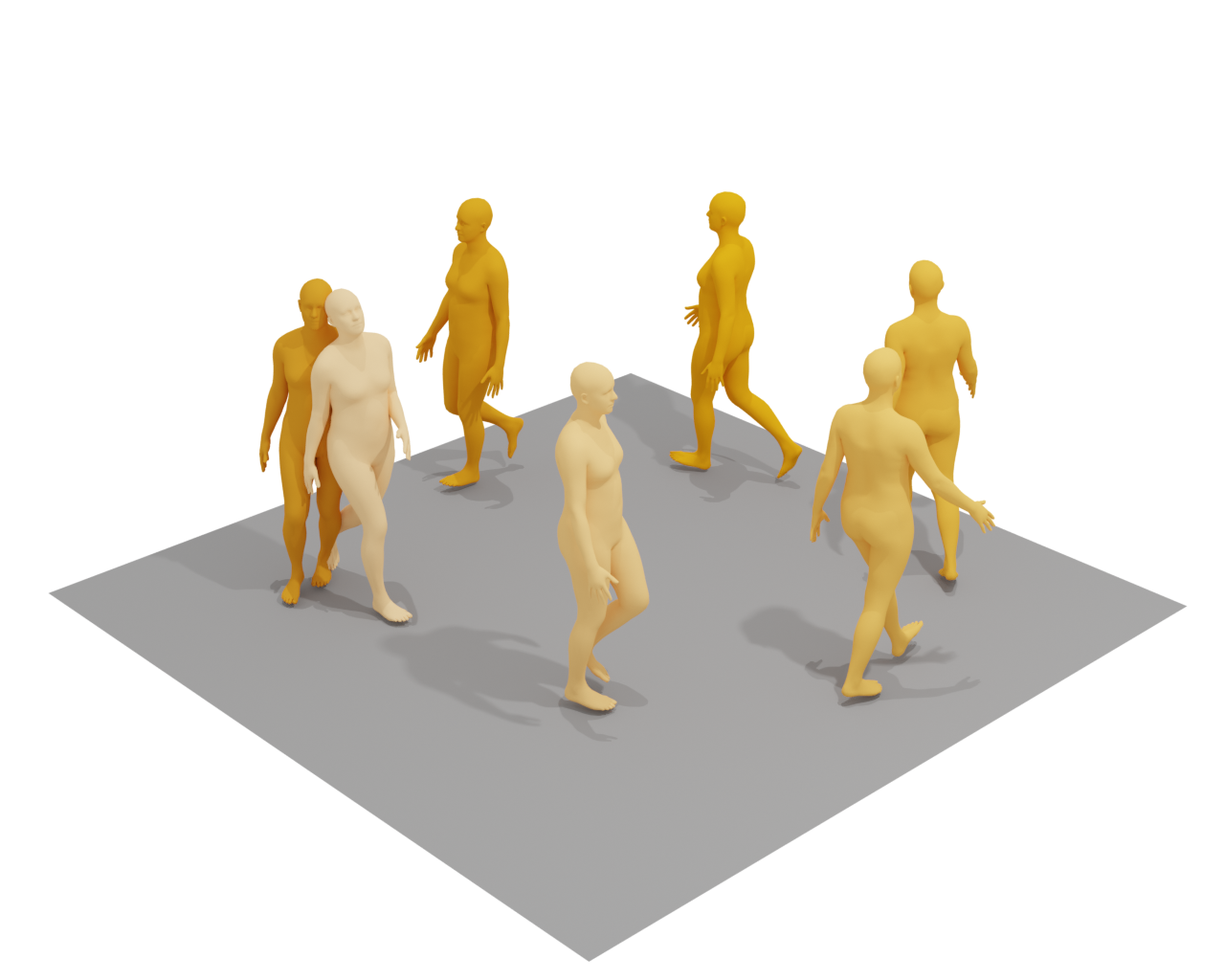} &
  \includegraphics[height=3.8cm,keepaspectratio]{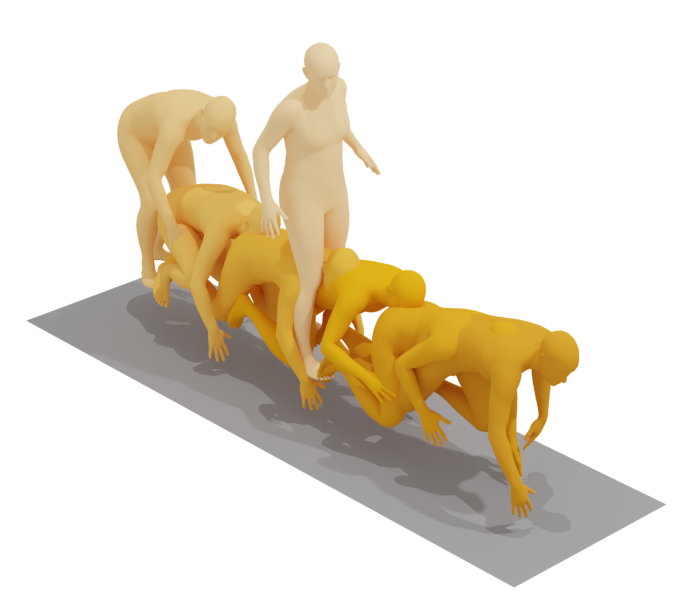} &
  \includegraphics[height=3.8cm,keepaspectratio]{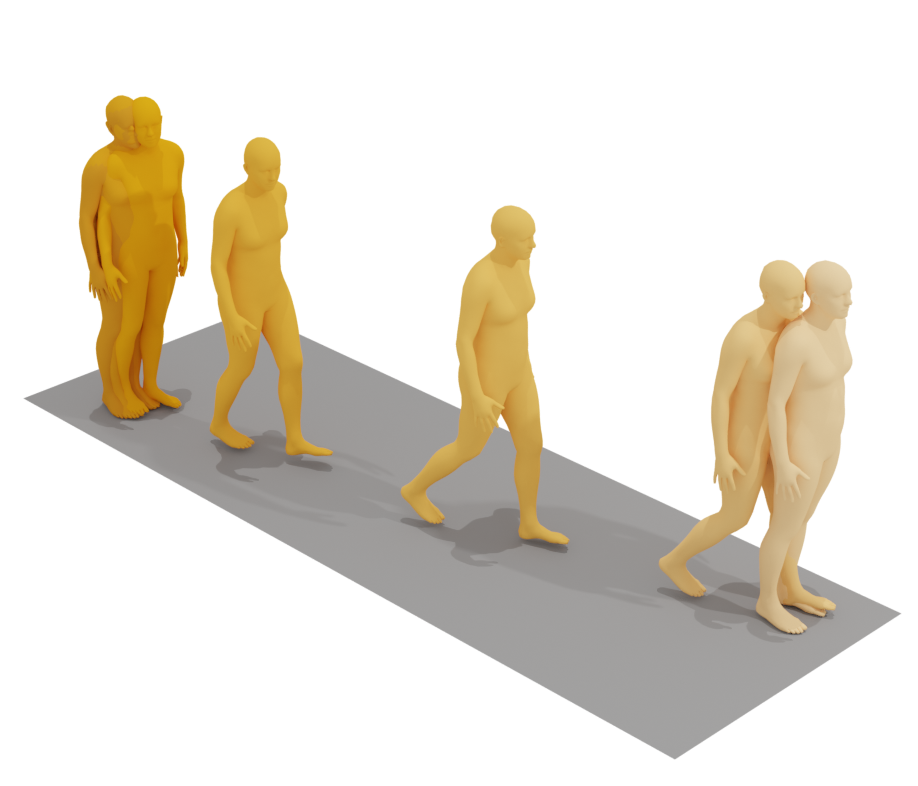} \\ 
\hline

\end{tabular}
\caption{More samples generated from our proposed MotionPCM model. Lighter colours represent earlier time points.}
\label{fig:more_qualitative}
\end{figure*}

\end{document}